\title{Robust Reinforcement Learning from Corrupted Human Feedback\thanks{The authors are listed in alphabetical order: $^1$Georgia Tech, $^2$Amazon.}}
\author{%
  Alexander Bukharin$^1$\\
  \And
  Ilgee Hong$^1$\\
  \And
  Haoming Jiang$^2$\\
  \And
  Zichong Li$^1$\\
  \And
  Qingru Zhang$^1$\\
  \And
  Zixuan Zhang$^1$\\
  \And
  Tuo Zhao$^1$\\
}
\begin{document}

\maketitle

\begin{abstract}
Reinforcement learning from human feedback (RLHF) provides a principled framework for aligning AI systems with human preference data. For various reasons, e.g., personal bias, context ambiguity, lack of training, etc, human annotators may give incorrect or inconsistent preference labels. To tackle this challenge, we propose a robust RLHF approach -- $R^3M$, which models the potentially corrupted preference label as sparse outliers. Accordingly, we formulate the robust reward learning as an $\ell_1$-regularized maximum likelihood estimation problem. Computationally, we develop an efficient alternating optimization algorithm, which only incurs negligible computational overhead compared with the standard RLHF approach. Theoretically, we prove that under proper regularity conditions, $R^3M$ can consistently learn the underlying reward and identify outliers, provided that the number of outlier labels scales sublinearly with the preference sample size. Furthermore, we remark that $R^3M$ is versatile and can be extended to various preference optimization methods, including direct preference optimization (DPO). Our experiments on robotic control and natural language generation with large language models (LLMs) show that $R^3M$ improves robustness of the reward  against several types of perturbations to the preference data. 
\end{abstract}

\section{Introduction}

As artificial intelligence (AI) systems continue to advance and become increasingly sophisticated, ensuring their alignment with human values and preferences has emerged as a paramount concern, particularly for recent large language models \citep{learn_summ_2020, instructGPT}. One promising approach to achieving this alignment is Reinforcement Learning from Human Feedback (RLHF), which involves training AI systems through a process of reward modeling based on human-provided feedback and preferences \citep{christiano2017deep, HH_rlhf, hinge_2023}.

A significant challenge in RLHF, however, arises from the inherent uncertainty present in the preference data provided by human evaluators \citep{eickhoff2018cognitive,bai2022training}. Since RLHF often targets highly complex scenarios where defining precise preference standards is difficult, if not impossible, annotators may provide undesirable or inconsistent preference labels, especially when they lack sufficient experience or training. In the case of a robotics system designed to assist with household tasks, an untrained annotator might label actions that complete the task efficiently but in a manner that could potentially cause property damage or compromise safety as preferable, overlooking the importance of safe and responsible operation.

An even more concerning scenario is that some human evaluators may maliciously assign incorrect preference labels \citep{wang2013perspectives,sabou2014corpus}. Personal prejudices, agendas, or lack of understanding about the true goals of the system could lead some annotators to intentionally mislabel examples, which could undermine the entire RLHF process and cause the model to learn undesirable or misaligned behaviors, posing a significant risk to the robustness and reliability of the AI system. Despite its critical importance for AI system alignment, this issue has received limited attention in the existing literature. For example, when training an AI system for automated content moderation on social media platforms, malicious annotators could mislabel examples of hate speech, misinformation, or harmful content as desirable, leading the model to learn to allow the proliferation of toxic and dangerous online behaviors. Despite its critical importance for AI system alignment, the robustness of RLHF has only received limited attention in the existing literature \citep{casper2023open}.

To address this challenge, we propose a robust Reinforcement Learning from Human Feedback (RLHF) approach -- $R^3M$ (\textbf{R}obust \textbf{R}eward \textbf{M}odeling for \textbf{R}LHF) to handle partially corrupted preference labels. Specifically, we assume that a subset of incorrect (corrupted) labels exists as outliers in the preference data used for training the reward model.\footnote{The deterministic outlier setting considered here is a specific case of label uncertainty, and it does not cover all possible sources of uncertainties, which will be discussed in more detail later.} To model the label corruption, we introduce an instance-specific perturbation factor to the Bradley-Terry (BT) model for human preference \citep{bradley1952rank}. We then learn the reward model and perturbation factors simultaneously by maximizing an $\ell_1$-regularized likelihood of the preference data. Theoretically, we prove that under proper regularity conditions, our approach can consistently learn the underlying ground truth reward and identify potential outliers, provided that the number of incorrect labels scales sublinearly with the preference sample size. Computationally, we show that the additional computational overhead of involving the perturbation factor in training is negligible: The log-likelihood is strictly convex and univariate with respect to each perturbation factor, and we can obtain its closed-form update at each iteration.

To demonstrate the effectiveness of our proposed method, we apply $R^3M$ to robotic control \cite{mujoco}. Specifically, we consider different types of corruptions to the preference data, including stochastic flipping, myopic flipping, and irrational flipping. We train robust reward models with $R^3M$ and optimize the policy based on the learned reward. We observe that $R^3M$ outperforms the standard RLHF method for all tasks under all preference models.

Moreover, $R^3M$ can be further generalized to other preference optimization methods. For example, we incorporate $R^3M$ into direct preference optimization (DPO) \citep{rafailov2023direct} and evaluate its performance on two natural language generation tasks -- dialogue and summarization. We adopt Llama-2 7B \citep{touvron2023llama} and use Claude 3 as the judge. We find that $R^3M$-DPO outperforms DPO in policy learning for both tasks, and our results suggest that the training data of both tasks are very likely to have a small percentage of corruptions. Besides, we also consider random flipping for corrupting the preference data, and the results also show that $R^3M$-DPO outperforms DPO.


\section{Related works}

\noindent\textbf{Robust reward modeling}. Research on robust reward modeling for RLHF remains limited, though some prior works have explored various types of robustness: The most relevant results are from \cite{cheng2024rime} and \cite{mandal2024corruption}. They consider partially corrupted preference labels and propose to filter corrupted data based on the label confidence; \cite{chakraborty2024maxmin} address robustness to diverse human preferences by learning a mixture of reward models; \cite{xiong2023gibbs} focus on robustness to sampling temperature and length bias, and develop an iterative version of DPO; \cite{coste2023reward} consider reward overoptimization and employ a reward model ensemble. 

While not explicitly framed as robustness, several other methods relate to this challenge: The \cite{ipo_2023} approach modifies DPO's loss function to avoid overfitting from weak regularization. \cite{coste2023reward} also explore reward ensembles; \cite{ethayarajh2024kto} develop a nonconvex human-aware loss, which downweighs training samples for reward learning when preference labels cannot be correctly predicted by reward models.

\noindent\textbf{Robust classification}. The reward learning problem in RLHF is related to classification, as both involve learning functions that map inputs to class labels or preference labels. However, in classification, the goal is to accurately predict class labels, while in RLHF, the goal is to learn a reward function that assigns scalar rewards to inputs, which are then used to optimize a policy or model. Despite these differences, the robustness literature in classification offers valuable insights for robust reward learning in RLHF.

The existing literature on robust classification has explored several directions. For instance, \cite{wu2007robust} and follow-up works \citep{barron2019general} have focused on developing nonconvex loss functions that are robust to outliers. Additionally, \cite{natarajan2013learning} and subsequent works \citep{han2018co} have investigated instance-dependent calibration of nonconvex loss functions, which requires some prior knowledge of label noise. Other works propose iteratively filtering data based on uncertainties of labels or losses \cite{xia2021sample}, which can be viewed as a relaxation of some nonconvex loss functions. \cite{liu2014robust} and follow-up works \citep{shafieezadeh2015distributionally} have concentrated on robustness to distribution shifts. More recently, \cite{kurakin2016adversarial} and subsequent works \citep{madry2017towards,miyato2018virtual,zhang2019theoretically} have explored adversarial robustness against the worse-case perturbation to the input.

\section{Robust reinforcement learning from corrupted human feedback}

We first introduce the problem setup on corrupted preference data, and then present $R^3M$ for robust reward modeling. Lastly, we develop an efficient optimization algorithm for $R^3M$ and further extend $R^3M$ to direct preference optimization.

\subsection{Corruption to human feedback}

We consider a Markov Decision Process (MDP) $\mathcal{M}=(\mathcal{S}, \mathcal{A}, P, r, \gamma)$ with state $s\in\mathcal{S}$, action $a\in\mathcal{A}$, state transition kernel $P$, discount factor $\gamma$, and the reward function $r: \mathcal{S}\times\mathcal{A}\rightarrow \mathbb{R}$, which is assumed to be aligned with human preferences. To learn such a reward function, we collect a (potentially corrupted) human preference dataset $\mathcal{D}_{0}$ by some behavior policy $\pi_{\rm ref}$, which contains $n$ pairs of trajectory segments $\mathcal{D}_{0}=(z_{w,i}, z_{\ell,i})_{i=1}^n$. Here, a trajectory segment $z$ of length $m$ denotes a sequence of consecutive state and action pairs $\{(s_t,a_t)\}_{t=1}^m$ sampled according to some behavior policy, and $z_{w,i}$ and $z_{\ell,i}$ denote the trajectory segments preferred and dispreferred by the human annotators, respectively.

Different from the conventional RLHF approach, we assume that the human preference follows a distribution perturbed by potential corruption:
\begin{align}\label{perturbed-BT}
p(z_{w,i}\succ z_{\ell,i};r^*,\delta_i^*) = \sigma(r^*(z_{w,i})-r^*(z_{\ell,i})+\delta_i^*),
\end{align}
where $r^*$ denotes the ground truth reward function when applied to the trajectory segment, $r^*(z) := \sum_{t=1}^m\gamma^{t}r^*(s_t,a_t)$ with discount factor $\gamma \in (0,1]$, $\sigma(x) = 1/(1+\exp(-x))$ denotes the sigmoid function, and $\delta_i^*$ is a deterministic perturbation modeling the annotator's bias. Note that when $\delta_i^*=0$, \eqref{perturbed-BT} is reduced to the standard Bradley-Terry (BT) model; when $\delta_i^*\ll r^*(z_{\ell,i})- r^*(z_{w,i})$, the annotator is very likely to give an incorrect preference. For notational simplicity, we denote $\delta^*=[\delta_1^*,...,\delta_n^*]^\top\in\mathbb{R}^{n}$, and we consider the case where $\delta^*$ is a sparse vector, i.e., the annotators' biases and mistakes only happen to a fraction of the preference data.

\begin{remark}
Note that the perturbation factors $\delta_i$'s are assumed to be deterministic and arbitrary. They can be intentionally introduced to mislead or confuse the reward learning process. This is in general more challenging than the setting of stochastic outliers, where the labels are flipped according to certain distribution.
\end{remark}

\subsection{Method}

We next develop the estimators of the ground truth reward $r^*$ and the sparse $\delta^*$. To encourage the sparsity of $\delta$, we propose to minimize an $\ell_1$-regularized negative log-likelihood of $r$ and $\delta_i$'s over the preference data:
\begin{align}\label{eq:rm}
(\hat{r},\hat{\delta})=\argmin_{r,\delta}\mathcal{F}_{\rm pref }(r,\delta) = -\frac{1}{n}\sum_{i=1}^n\left[\log p(z_{w,i}\succ z_{\ell,i};r,\delta)\right] + \lambda\|\delta\|_1,
\end{align}
where $\lambda\in(0,1)$ is a tuning parameter, and $\|\delta\|_1=\sum_{i=1}^n|\delta_i|$ denotes the $\ell_1$ norm of $\delta$. The $\ell_1$ regularizer has been widely used in the existing literature on sparse estimation, such as Lasso \citep{tibshirani1996regression}. It can be viewed as a convex relaxation of the $\ell_0$ norm of $\delta$, i.e., $\|\delta\|_0 = \sum_{i=1}^n\mathds{1}(\delta_i\neq 0)$. By tuning $\lambda$ from large to small, we can control the number of nonzero entries in $\delta$ from small to large.

\begin{remark}
The standard preference loss function is more susceptible to the influence of outliers in the training data. Therefore, the model may exhibit underfitting on the inlier (clean) data points, as it attempts to minimize the impact of the outliers on the overall loss. This eventually can distort the decision boundary, leading to suboptimal performance on the majority of the inlier data.
\end{remark}

Once the reward is learned, we further apply Proximal Policy Optimization (PPO, \cite{schulman2017proximal}) to find a policy $\hat{\pi}$, which maximizes the expected sum of discounted rewards,
\begin{align*}
\hat{\pi} = \argmax_{\pi} \mathbb{E}_{(s_t,a_t)\sim\mathcal{D}_{\pi}}\big[\sum_{t=1}^{\infty} \gamma^{t}\hat{r}(s_t,a_t)\big],
\end{align*}
where $\mathcal{D}_{\pi}$ denotes the stationary distribution of the state-action pair induced by $\pi$.

\subsection{Alternating optimization}

We present an efficient alternating optimization algorithm for solving \eqref{eq:rm}. Suppose we parameterize the reward model $r$ as a neural network with parameter $\phi$. At the $k$-th iteration, we have the iterate $\phi^{(k)}$, and we sample a pair of trajectory segments $z_{w,i}$ and $z_{\ell,i}$. We first fix $\phi^{(k)}$ and minimize the loss with respect to $\delta_i$ by
\begin{align}\label{eq:update-delta}
\delta_i^{(k+1)} = \argmin_{\delta_i}-\log(\sigma(r(z_{w,i};\phi^{(k)})-r(z_{\ell,i};\phi^{(k)})+\delta_i))+ \lambda|\delta_i|.
\end{align}
By examining the optimality condition of \eqref{eq:update-delta},
\begin{align*}
\sigma(r(z_{w,i};\phi^{(k)})-r(z_{\ell,i};\phi^{(k)})+\delta_i) - 1 + \lambda\xi_i=0,
\end{align*}
where $\xi_i\in\partial|\delta_i^{(k+1)}|$, we can obtain a closed-form solution
\begin{align}\label{eq:soft-thresholding}
\delta_i^{(k+1)} = \max\{\log(1/\lambda-1)-r(z_{w,i};\phi^{(k)})+r(z_{\ell,i};\phi^{(k)}),0\}.
\end{align}

Denote $\ell_i(\phi,\delta_i) = -\log(\sigma(r(z_{w,i};\phi)-r(z_{\ell,i};\phi)+\delta_i))+ \lambda|\delta_i|,$ we update $\phi$ by a stochastic gradient descent step given $\delta_i^{(k+1)}$
\begin{align}\label{phi-SGD}
\phi^{(k+1)} = \phi^{(k)} - \eta_{\phi}\nabla_{\phi}\ell_i(\phi^{(k)},\delta_i^{(k+1)}),
\end{align}
where $\eta_{\phi}$ is the learning rate. 





\subsection{Extension to direct preference optimization (DPO)}\label{sec:dpo}

Our proposed $R^3M$ approach is generic and can be extended to DPO~\citep{rafailov2023direct}, which is another popular method for policy learning from human preferences. DPO directly learns the policy in supervised manner using the preference data of state-action pairs $\cD_{0}=(s_i,a_{w,i}, a_{\ell,i})_{i=1}^n$. This approach forgoes the need to learn the reward function explicitly by reparameterizing the reward function $r$ with respect to its optimal policy $\pi_r$: Recall that $\pi_{\mathrm{ref}}$ denotes the behavior policy, we have
\begin{align}\label{eq:reparameterization}
r(s,a)=\beta\log\left(\frac{\pi_r(a|s)}{\pi_{\rm ref}(a|s)}\right)+\beta\log Z(s),\quad\textrm{where}~Z(s)=\sum_{a}\pi_{\rm ref}(a|s)\exp\left(\frac{r(s,a)}{\beta}\right),
\end{align}
$\beta>0$ is a tuning parameter controlling the KL divergence between $\pi_{\rm ref}$ and $\pi_{\rm ref}$. By plugging in \eqref{eq:reparameterization} back into \eqref{eq:rm}, we have the policy optimization problem
\begin{align*}
(\hat{\pi},\hat{\delta})=\argmin_{\pi,\delta}\cF_{\mathrm{DPO}}(\pi)
=-\frac{1}{n}\sum_{i=1}^n\log\left(\sigma\left(\beta r_{\pi}(a_{w,i}|s)-\beta r_{\pi}(a_{\ell,i}|s)+\delta_i\right)\right)+ \lambda\|\delta\|_1,
\end{align*}
where $r_{\pi}(a|s) = \log\left(\pi(a|s)/\pi_{\mathrm{ref}}(a|s)\right)$ denotes the log-probability ratio. 

\section{Theoretical analysis}


We next establish the statistical guarantees for $R^3M$ on the reward recovery. Specifically, we prove that the reward function learned by $R^3M$ from corrupted human feedback can be as accurate as its counterpart without outliers. 

To better convey our theoretical insights, we consider a bandit setting, i.e., MDP with a horizon of one, mirroring the setup in DPO (see Section \ref{sec:dpo}). The preference data of state-action pairs are given as $\cD_{0}=\{(s_i, a_{1,i}, a_{2,i}, y_i)\}_{i=1}^N$, where $y_i = \ind(a_{1,i} \succ a_{2,i})$ denotes whether $a_{1,i}$ is preferred to $a_{2,i}$. Such a setting is common in real-world LLM applications such as (single-turn) question-answering or text summarization task, where $a_{1,i}$ and $a_{2,i}$ denote two different responses corresponding to the same prompt $s_i$.To ease the theoretical analysis, we consider a tabular setting, where the number of states $|\cS|$ and the number of actions $|\cA|$ are finite. For notational simplicity, we denote the true reward as a vector $R^* = [r^*(s,a)]\in \RR^{|\cS||\cA|}$, which concatenates the rewards of all state-action pairs, $r^*(s,a)$ with $s\in\cS$ and $a\in\cA$. 

Before we proceed with our main results, we first present the statistical guarantees of standard RLHF on the reward recovery, when there is no outlier in preference data ($\delta^* = 0$). Specifically, we can adapt Lemma 3.1 in \citet{zhu2023principled} to our setting: The Maximum Likelihood Estimator (MLE) $\hat{R}$ attains the following statistical rate of convergence: 
\begin{equation}\label{eq:clean}
   \norm{\hat{R} - R^*}_{\Sigma_{0}}^2 = \cO\left(\frac{|\cS||\cA|}{n} \right),
\end{equation}
with overwhelming probability.
Here, $\Sigma_{0} = \frac{1}{n} \sum_{i=1}^n  x_i x_i^\top$ is a positive semi-definite matrix depending on the training dataset $\cD_0$ with
$x_i = \ind(s = s_i, a=a_{1,i}) -  \ind(s = s_i, a=a_{2,i}) \in \RR^{|\cS||\cA|}$, and
$\left\|\cdot \right\|_{\Sigma_{0}}$ is the matrix norm defined as $\left\|v \right\|_{\Sigma_{0}}^2 = v^\top \Sigma_{0} v$ for any vector $v \in  \RR^{|\cS||\cA|}$.

\begin{remark} As has been shown in \citet{zhu2023principled}, given \eqref{eq:clean}, one can further prove the desirable regret bound for the learnt policy. Therefore, our theoretical analysis only focuses on the statistical guarantees on the reward recovery. 
\end{remark}

We then impose the following two assumptions on the problem.
\begin{assumption}\label{assump:sparse}
The perturbation $\delta^*$ only has $s\geq 0$ non-zero entries, i.e. $\|\delta^*\|_0 \leq s$. Moreover, there exists a constant $C >0$ such that $\|\delta^*\|_\infty \leq C$.
\end{assumption}
\begin{assumption}\label{assump:idt}
    Let $B>0$ be some constant. We have $r^* \in \cR_B$, where
    \begin{align*}
    \textstyle\cR_B = \left\{r :\cS \times\cA \to \RR~\big|~ \sum_{s \in \cS, a\in\cA} r(s,a) =0, \|R\|_2^2 = \sum_{s \in \cS, a\in\cA} (r(s,a))^2 \leq B\right\}.
    \end{align*}
\end{assumption}
Note that $s$ is allowed to scale with $(n,|S|,|A|)$, but $C$ and $B$ are not. This is mainly due to technical reasons to ensure the model identifiability. Accordingly, we adopt a constrained MLE formulation:
\begin{align}\label{eq:constrained-obj}
(\hat{r},\hat{\delta})=\argmin_{r,\delta} -\frac{1}{n}\sum_{i=1}^n\left [\log p(s_i,a_{1,i},a_{2,i},y_i;r,\delta)\right] + \lambda\|\delta\|_1\quad\textrm{subject~to}~r\in \cR_B,
\end{align}
where $p(s_i,a_{1,i},a_{2,i},y_i;r,\delta)$ is defined under the bandit setting as follows: 
\begin{align*}
p(s_i,a_{1,i},a_{2,i},y_i;r,\delta) = &\ind(y_i=1) 
     \sigma(r(s_i,a_{1,i})- r(s_i,a_{2,i}) + \delta_i) \\
     &+\ind(y_i=0) 
     \sigma(r(s_i,a_{2,i})- r(s_i,a_{1,i}) + \delta_i).
\end{align*}
Note that we add the constraint $r\in \cR_B$ in \eqref{eq:constrained-obj} also due to the technical reason under the tabular setting. In practice, the reward model with function approximation is usually trained with proper regularization, and therefore $r$ can be bounded without any constraint.
\begin{theorem}\label{thm}
    Suppose Assumptions \ref{assump:sparse} and \ref{assump:idt} hold. Let $\hat{R} = [\hat{r}(s,a)]$ and $\hat{\delta}$ be the minimizer of \eqref{eq:constrained-obj}. Given $\lambda = 1/n$, there exists universal constants $C_0>0$ and $\gamma$, such that we have
    \begin{align*}
        \norm{\hat{R} - R^*}_{\Sigma_{0}}^2 + \frac{1}{n}\norm{\hat{\delta} - \delta^*}_2^2 \leq \frac{4}{\gamma^2} \left(\frac{4s}{n} + \frac{C_0|\cS||\cA|}{n}\right)
    \end{align*}
    with overwhelming probability. 
\end{theorem}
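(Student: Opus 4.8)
The plan is to run a standard basic-inequality / localized-analysis argument for regularized M-estimators. Let $\widehat\theta = (\widehat R, \widehat\delta)$ be the minimizer of \eqref{eq:constrained-obj} and $\theta^* = (R^*, \delta^*)$. Since $\theta^*$ is feasible ($r^* \in \cR_B$), we start from the basic inequality $\cF_{\mathrm{pref}}(\widehat\theta) \le \cF_{\mathrm{pref}}(\theta^*)$, i.e. the empirical negative log-likelihood plus $\lambda\|\cdot\|_1$ is no larger at $\widehat\theta$. Write $\cL(\theta)$ for the average negative log-likelihood. Rearranging gives
\begin{align*}
\cL(\widehat\theta) - \cL(\theta^*) \le \lambda\left(\|\delta^*\|_1 - \|\widehat\delta\|_1\right).
\end{align*}
First I would lower-bound the left side. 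Because the per-sample loss is the negative log of a sigmoid evaluated at a linear functional of $\theta$, and the arguments stay in a bounded interval (using $\|R\|_2 \le B$, hence $|r(s,a)| \le B$, plus $\|\delta^*\|_\infty \le C$ for the true parameter, and a preliminary boundedness step for $\widehat\delta$ via its closed-form/soft-threshold structure), the log-partition function is $\gamma$-strongly convex on this region for some universal $\gamma>0$. This yields a quadratic lower bound
\begin{align*}
\cL(\widehat\theta) - \cL(\theta^*) \ge \langle \nabla\cL(\theta^*), \widehat\theta - \theta^*\rangle + \tfrac{\gamma}{2}\left(\|\widehat R - R^*\|_{\Sigma_0}^2 + \tfrac1n\|\widehat\delta - \delta^*\|_2^2\right),
\end{align*}
where the quadratic form appears precisely because the design vectors for $R$ are the $x_i$ (giving $\Sigma_0$) and the design for $\delta_i$ is the $i$-th coordinate (giving $\frac1n\|\cdot\|_2^2$).

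Next I would control the stochastic term $\langle \nabla\cL(\theta^*), \widehat\theta - \theta^*\rangle$. Split it into the $R$-block and the $\delta$-block. The $R$-block, $\langle g_R, \widehat R - R^*\rangle$ with $g_R = \nabla_R\cL(\theta^*)$, is handled exactly as in \citet{zhu2023principled}: $g_R$ is an average of bounded mean-zero vectors, concentrates in the $\Sigma_0$-dual norm at rate $\sqrt{|\cS||\cA|/n}$ (sub-Gaussian / sub-exponential tail, union bound over the tabular dimension), and Cauchy–Schwarz against $\|\widehat R - R^*\|_{\Sigma_0}$ followed by Young's inequality absorbs it into the quadratic term at the cost of an additive $O(|\cS||\cA|/n)$. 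For the $\delta$-block, $\langle g_\delta, \widehat\delta - \delta^*\rangle = \frac1n\sum_i g_{\delta,i}(\widehat\delta_i - \delta_i^*)$ with $|g_{\delta,i}| \le 1$ almost surely, so this is bounded by $\frac1n\|\widehat\delta - \delta^*\|_1$, which I combine with the regularization slack.

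For the regularization term, use the standard cone decomposition over $S := \mathrm{supp}(\delta^*)$, $|S|\le s$: $\|\delta^*\|_1 - \|\widehat\delta\|_1 \le \|(\widehat\delta-\delta^*)_S\|_1 - \|(\widehat\delta-\delta^*)_{S^c}\|_1$. Combined with the $\delta$-block bound and the choice $\lambda = 1/n$ (chosen so $\lambda$ dominates $\|g_\delta\|_\infty/n$ up to constants), the $S^c$ part is killed and the $S$ part gives $\le C'\lambda\|(\widehat\delta-\delta^*)_S\|_1 \le C'\lambda\sqrt{s}\,\|\widehat\delta - \delta^*\|_2$; then Young's inequality trades this against $\frac{\gamma}{4n}\|\widehat\delta - \delta^*\|_2^2$ at the cost of an additive $O(\lambda^2 n s) = O(s/n)$. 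Collecting the quadratic lower bound against all absorbed terms yields $\frac{\gamma}{4}(\|\widehat R - R^*\|_{\Sigma_0}^2 + \frac1n\|\widehat\delta-\delta^*\|_2^2) \le C_0\frac{|\cS||\cA|}{n} + \frac{s}{n}$ up to constants, which rearranges to the stated bound.

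The main obstacle I anticipate is establishing the strong-convexity constant $\gamma$ uniformly, i.e. certifying that $\widehat\delta$ (not just $\delta^*$) lives in a bounded region so that the sigmoid's argument stays where $\sigma' $ is bounded below. Since $s$ and hence $\|\widehat\delta\|_2$ may grow, one cannot bound $\|\widehat\delta\|_\infty$ by feasibility alone; the fix is to exploit the closed-form minimizer in $\delta_i$ (equation \eqref{eq:soft-thresholding}), which with $r\in\cR_B$ forces each $\widehat\delta_i \le \log(1/\lambda - 1) + 2\sqrt{B} = O(\log n)$, and then to argue the mild $\log n$ growth only costs a polylogarithmic factor in $\gamma$ (or, with a more careful localized peeling argument, no factor at all) — reconciling this with the clean $\gamma$ appearing in the theorem statement is the delicate point.
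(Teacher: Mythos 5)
Your proposal follows essentially the same route as the paper's proof: the basic inequality at the constrained minimizer, blockwise strong convexity of the negative log-likelihood (yielding the $\|\cdot\|_{\Sigma_0}^2$ and $\tfrac{1}{n}\|\cdot\|_2^2$ quadratics), the concentration bound from Zhu et al.\ for $\nabla_R \cL$ in the $\Sigma_0^{\dagger}$-norm, the bound $\|\nabla_\delta \cL\|_\infty \le 1/n$ motivating $\lambda = 1/n$, the cone decomposition over $\mathrm{supp}(\delta^*)$, and $\|\Delta\delta_{\cS}\|_1 \le \sqrt{s}\,\|\Delta\delta\|_2$ before absorbing terms. The ``delicate point'' you flag -- that the strong-convexity constant in $\delta$ must hold along the whole segment from $\delta^*$ to $\hat{\delta}$, whose sup-norm is not controlled by feasibility -- is genuine, and the paper's own proof glosses over it by invoking a Hessian lower bound established only for $\|\delta\|_\infty \le C$; your soft-thresholding argument is a sensible way to close that gap, at the cost of a logarithmic degradation in $\gamma$.
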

The proof of Theorem \ref{thm} is deferred to Section \ref{sec:proof-skt}. We make the following remarks: 

\begin{remark}
    When the data perturbation is sufficiently sparse, i.e. $s \leq |\cS||\cA|$, the convergence rate of estimating the reward under the presence of corrupted data is dominated by $|\cS||\cA|/n$. Notably, it is of the same order as that using clean data, which is presented in \eqref{eq:clean}. In other words, even there is contamination in data, the learned reward can still be as accurate as its counterpart without outliers.
However, if the ground-truth perturbation $\delta^*$ is not very sparse, i.e.  $s \gg |\cS||\cA|$, it can hurt the statistical rate of convergence.
\end{remark}


\begin{remark}
    In our analysis, we estimate rewards for each state-action pair under the tabular bandit setting. However, our results can be extended to infinite-state and infinite-action case with reward function approximation, following \citet{zhu2023principled}. Specifically, our results work for the scenario where reward functions can be linearly approximated \citep{zhu2023principled}. Moreover, when the reward function is smooth, it can be approximated by neural networks and our analysis for convergence rate of reward recovery under corrupted preference data can apply as well 
    \citep{chen2020doubly}.
\end{remark}

\section{Experiment}

In this section, we demonstrate the effectiveness of our proposed robust loss function through its application in robotic control and natural language generation tasks.

\subsection{Robotic control}


\textbf{Experiment setup.} We evaluate the robustness of $R^3M$ across three robotic control tasks within the PyBullet~\citep{pybullet} environments: \textit{HalfCheetah}, \textit{Ant}, and \textit{Hopper}. These environments are similar to those in Mujoco \cite{mujoco} from OpenAI Gym \citep{brockman2016openai}, but they are recognized as being much harder to solve \citep{raffin2022smooth}. Similar to \citet{gao2023scaling}, our experimental setup is synthetic. We generate preference labels for each pair of samples based on the ground truth rewards $r^*$ from the PyBullet environments to avoid the high cost of gathering human preferences. To simulate noisy human preferences, we consider three noise models as follows:

1. \textbf{Stochastic noise model}: For a pair of trajectory segments $(z_1,z_2)$, we generate a preference label with the probability $\sigma((r^*(z_1)-r^*(z_2))/\tau)$ where $\tau>0$ is the temperature. This model captures typical human behavior, where preferences are more likely to be corrupted when the true preference is unclear. We control the noise rate by tuning $\tau$ in $\{1.0,2.0,3.0\}$. As the value of $\tau$ increases, the probability becomes closer to uniform, causing greater corruption. 

2. \textbf{Myopic noise model}: For a pair of sequences of state-action pairs $z_1=\{(s_{1,t},a_{1,t})\}_{t=1}^m$ and $z_2=\{(s_{2,t},a_{2,t})\}_{t=1}^m$, we generate a preference label by
\begin{align*}
    z_1\succ z_2 \quad\text{if}\;\;\sum_{t=1}^m\gamma^{m-t}r^*(s_{1,t},a_{1,t})>\sum_{t=1}^m\gamma^{m-t}r^*(s_{2,t},a_{2,t})\;\;\text{and}\;\;
    z_2\succ z_1 \quad\text{otherwise},
\end{align*}
where $\gamma\in(0,1]$ is a discount factor. This model represents shortsighted human behavior, where people may place more weight on recent observations. We control the noise rate by tuning $\gamma$ in $\{0.3,0.5,0.7\}$. In general, as the value of $\gamma$ decreases, the importance of initial observations diminishes, which leads to greater corruption.




3. \textbf{Irrational noise model}: For pairs of trajectory segments $\{(z_{1,i},z_{2,i})\}_{i=1}^{|\cB|}$ in a mini-batch $\cB\subset\cD_0$ where $r^*(z_{1,i})>r^*(z_{2,i})$ (i.e., $z_{1,i}$ is preferred over $z_{2,i}$ by the ground truth reward), we flip the preference labels of the top $|\cB|^{p}/|\cB|\times 100$\% pairs, ordered by the largest true reward difference $r^*(z_{1,i})-r^*(z_{2,i})$. Here, $p\in(0,1)$ represents a sublinear rate of label perturbation. This model considers extreme human errors, where people can make mistakes even on clear preference pairs. We control the noise rate by tuning $p$ in $\{1/3, 1/2, 2/3\}$. As the value of $p$ increases, a larger number of preferences are corrupted. 


For reward function, we use two-hidden-layer MLPs, with each hidden layer containing 64 units, which is consistent with the architecture used in both policy and value networks. Similarly with \citet{christiano2017deep}, we repeat the following three steps for each stage: (i) We sample a set of trajectories using the policy $\pi$, and a reward function $\hat{r}$ assigns a reward to each trajectory segment. We then update $\pi$ using proximal policy optimization (PPO, \citet{schulman2017proximal}). (ii) We split the trajectory segments into a training set and a testing set. From the training set, we randomly sample pairs of segments, generate preference labels using a noise model, and construct $\cD_0$. (iii) We train $\hat{r}$ on $\cD_0$.


We set the budget to 2 million timesteps. Every 10,000 timesteps, we evaluate the performance of the policy $\pi$ over 20 test episodes. We conduct training using 10 different random seeds. For hyperparameter tuning in both reward learning and policy optimization, we identify the best policy based on its performance (i.e., the highest averaged return over timesteps) and then select the corresponding reward function. For evaluation metric, we follow \citet{lee2021bpref} and use normalized return with respect to the performance of RL using the ground truth reward:
\begin{align*}
    \text{Normalized return}=\dfrac{\text{Average return of RLHF}}{\text{Average return of RL with ground truth reward}}.
\end{align*}
Further details of implementation and hyperparameter tuning procedures are in Appendix \ref{appen: id: rc}.

\textbf{Results.} We summarize the results on three PyBullet tasks as follows:

\begin{table}[t]
\caption{Table for the normalized return with the standard deviation of the best policy for the baseline (cross-entropy loss) and $R^3M$ across all noise models and noise rates.}
\label{tab:rc_main}
\resizebox{.69\linewidth}{!}{%
\begin{minipage}{1\linewidth}
\begin{tabular}{lcccccccccc} 
\toprule
\multirow{3}{*}{\bf Task}      & \multirow{3}{*}{\bf Method} & \multicolumn{3}{c}{\bf Stochastic}                                              & \multicolumn{3}{c}{\bf Myopic}                                                          & \multicolumn{3}{c}{\bf Irrational}   \\\cmidrule(lr){3-11}
                               &                             & \multicolumn{3}{c}{$\tau$}                                                      & \multicolumn{3}{c}{$\gamma$}                                                            & \multicolumn{3}{c}{$p$}              \\
                               &                             &  1.0                     & 2.0                     & 3.0                        & 0.3                         & 0.5                         & 0.7                         & 1/3                         & 1/2                        & 2/3                        \\\cmidrule(lr){1-11}
\multirow{2}{*}{HalfCheetah}   &  CE                         &  $0.96^{\pm 0.08}$       & $0.9^{\pm 0.09}$        & $0.79^{\pm 0.31}$          & $0.69^{\pm 0.07}$           & $0.72^{\pm 0.13}$           & $0.78^{\pm 0.22}$           & $\bm{0.94}^{\pm 0.08}$      & $0.84^{\pm 0.07}$          & $0.09^{\pm 0.25}$           \\
                               &  $R^3M$                     &  $\bm{0.97}^{\pm 0.07}$  & $\bm{0.93}^{\pm 0.09}$  & $\bm{0.89}^{\pm 0.12}$     & $\bm{0.79}^{\pm 0.11}$      & $\bm{0.78}^{\pm 0.13}$      & $\bm{0.79}^{\pm 0.1}$       & $0.9^{\pm 0.07}$            & $\bm{0.87}^{\pm 0.07}$     & $\bm{0.21}^{\pm 0.29}$           \\\cmidrule(lr){1-11}
\multirow{2}{*}{Ant}           &  CE                         &  $0.79^{\pm 0.21}$       & $0.67^{\pm 0.34}$       & $0.53^{\pm 0.27}$          & $0.77^{\pm 0.14}$           & $0.83^{\pm 0.04}$           & $0.83^{\pm 0.1}$            & $0.88^{\pm 0.04}$           & $0.75^{\pm 0.07}$          & $0.21^{\pm 0.12}$           \\
                               &  $R^3M$                     &  $\bm{0.92}^{\pm 0.05}$  & $\bm{0.85}^{\pm 0.07}$  & $\bm{0.66}^{\pm 0.23}$     & $\bm{0.82}^{\pm 0.08}$      & $\bm{0.85}^{\pm 0.07}$      & $\bm{0.89}^{\pm 0.08}$      & $\bm{0.92}^{\pm 0.03}$      & $\bm{0.82}^{\pm 0.07}$     & $\bm{0.26}^{\pm 0.11}$           \\\cmidrule(lr){1-11}
\multirow{2}{*}{Hopper}        &  CE                         &  $0.62^{\pm 0.3}$        & $0.41^{\pm 0.23}$       & $0.36^{\pm 0.02}$          & $0.52^{\pm 0.38}$           & $0.4^{\pm 0.43}$            & $0.16^{\pm 0.24}$           & $0.19^{\pm 0.15}$           & $0.15^{\pm 0.2}$           & $0.08^{\pm 0.15}$           \\
                               &  $R^3M$                     &  $\bm{0.69}^{\pm 0.2}$   & $\bm{0.65}^{\pm 0.12}$  & $\bm{0.65}^{\pm 0.2}$      & $\bm{0.54}^{\pm 0.35}$      & $\bm{0.45}^{\pm 0.38}$      & $\bm{0.38}^{\pm 0.33}$      & $\bm{0.29}^{\pm 0.43}$      & $\bm{0.19}^{\pm 0.28}$     & $\bm{0.16}^{\pm 0.13}$            \\\bottomrule
                               
\end{tabular}
\end{minipage}%
}
\end{table}

Table~\ref{tab:rc_main} presents the results for the baseline (cross-entropy loss) and $R^3M$ across three different tasks and noise models (stochastic, myopic, and irrational) with varying noise rates. As can be seen, $R^3M$ consistently outperforms the baseline across all tasks, noise models, and noise rates. The only exception is in the HalfCheetah task with the irrational noise model at $p=1/3$, where only 6.25\% of the training data is corrupted. While there is some overlap in performance variability, the results indicate that $R^3M$ is more robust to noise in human preferences from various sources than the standard cross-entropy loss. The improvements are particularly notable at higher noise rates. Figures~\ref{fig:LCPC_stochastic}, \ref{fig:LCPC_myopic} and \ref{fig:LCPC_irrational} show the learning curves and percentile plots for each noise model. For the learning curves, unnormalized returns at each timestep are averaged across 10 different seeds, then smoothed over timesteps using an exponential moving average (EMA) with a smoothing factor of $\alpha=0.2$. For the percentile plots, unnormalized returns from 10 different seeds are sorted in ascending order. In most cases, we can see that $R^3M$ outperforms the baseline at the majority of timesteps and percentiles across all noise models.

\begin{figure}[htb!]
\centering
\includegraphics[width=0.9\linewidth]
{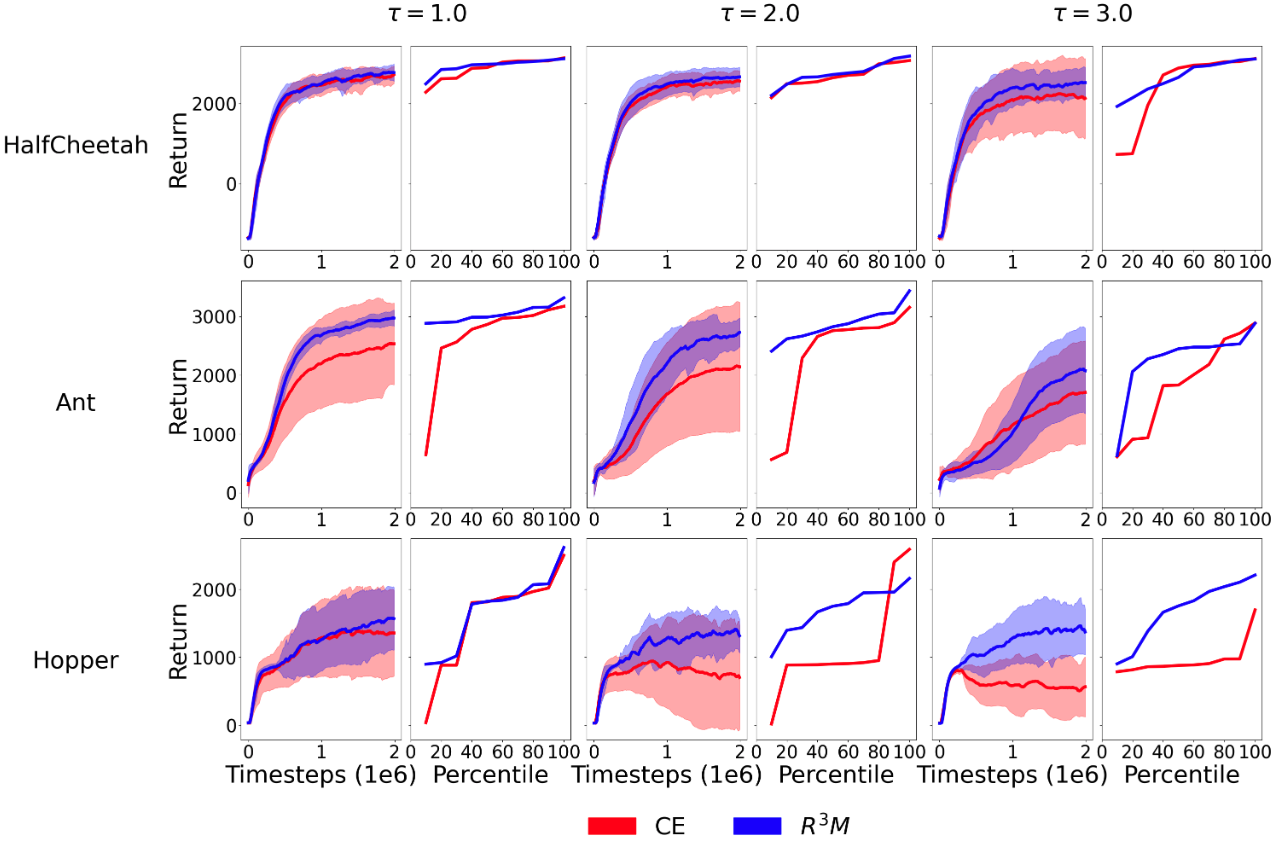}
\caption{Learning curves and percentile plots for the baseline (cross-entropy loss) and $R^3M$ for the \textbf{stochastic noise model}. 
}
\label{fig:LCPC_stochastic}
\end{figure}

\begin{figure}[htb!]
\centering
\includegraphics[width=0.9\linewidth]
{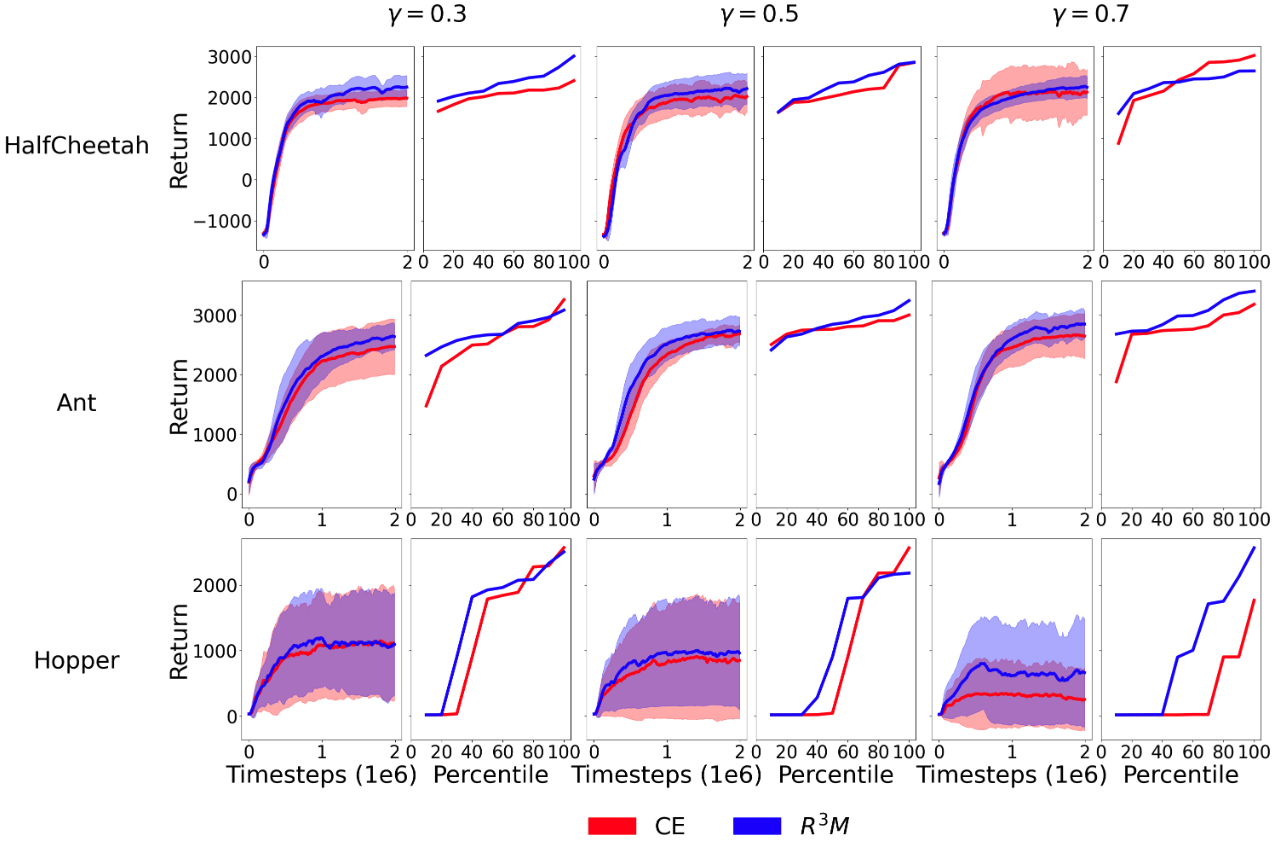}
\caption{Learning curves and percentile plots for the baseline (cross-entropy loss) and $R^3M$ for the \textbf{myopic noise model}.}
\label{fig:LCPC_myopic}
\end{figure}

\begin{figure}[htb!]
\centering
\includegraphics[width=0.9\linewidth]
{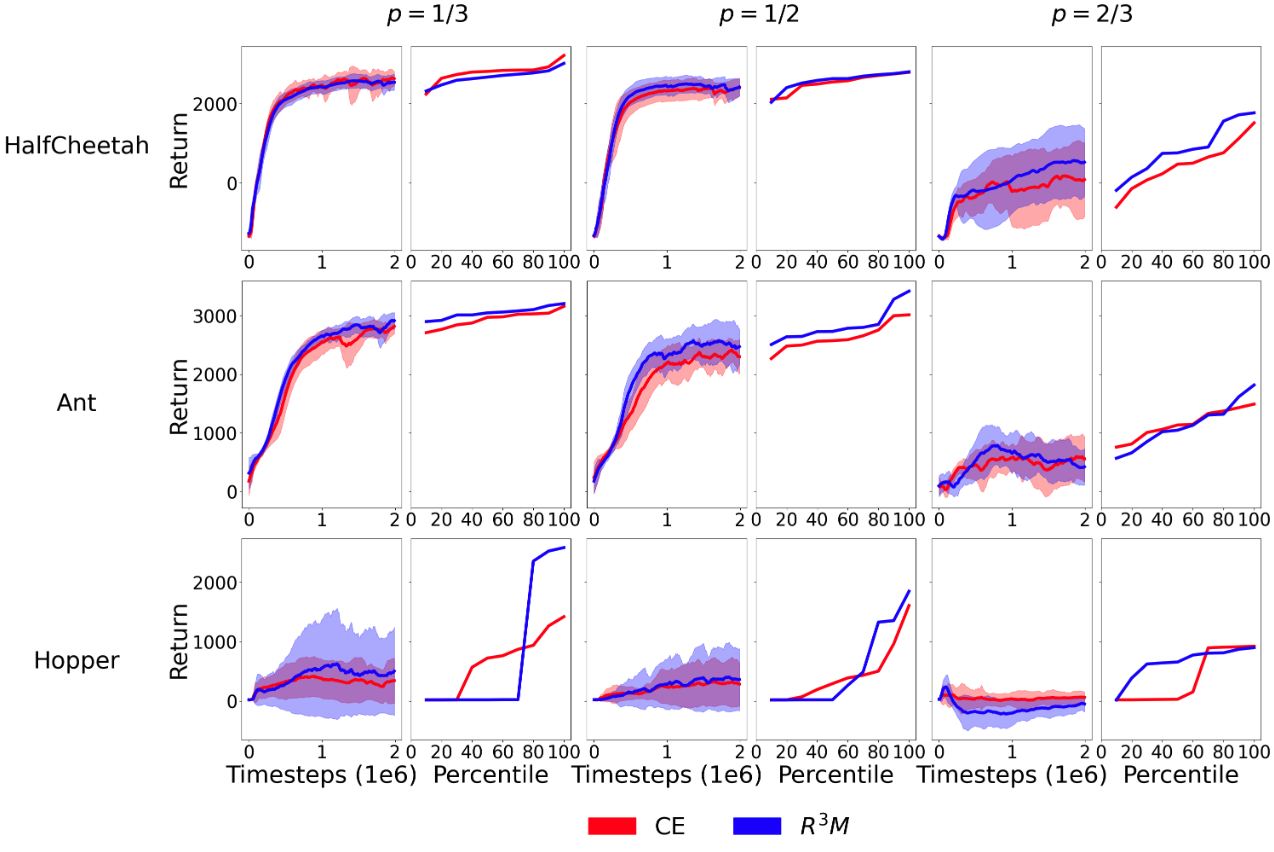}
\caption{Learning curves and percentile plots for the baseline (cross-entropy loss) and $R^3M$ for the \textbf{irrational noise model}.
}
\label{fig:LCPC_irrational}
\end{figure}

In Figure~\ref{fig:rc_sub}, we present the outlier ratios for zero and positive learned perturbation factors across three noise models (stochastic, myopic, and irrational). We observe that the outlier ratios for positive learned perturbation factors are significantly higher than those for zero learned perturbation factors across all three noise models. This substantial difference indicates that $R^3M$ effectively identifies outliers from various sources.

\begin{figure}[htb!]
\centering
\includegraphics[width=0.9\linewidth]
{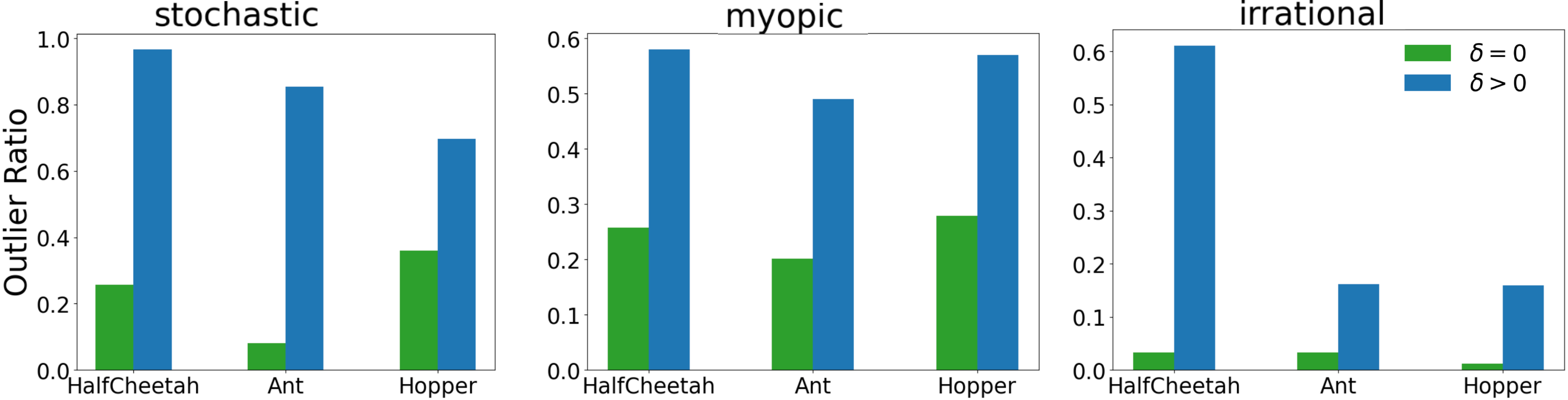}
\caption{Comparison of outlier ratios between sample pairs with zero and positive learned perturbation factors for $\tau=1.0$, $\gamma=0.3$, and $p=1/3$ for the stochastic, myopic, and irrational noise models, respectively.}
\label{fig:rc_sub}
\end{figure}

\subsection{Natural Language Generation}

\textbf{Experiment setup.} We evaluate the proposed robust extension of DPO on two natural language generation tasks: \textit{summarization} and \textit{single-turn dialogue}.
In summarization, the policy generates sentences to summarize the main points from posts on Reddit. Following previous work \citep{rafailov2023direct}, we conduct instruction tuning on the filtered TL;DR summarization dataset \citep{tldr_2017} to get the initial reference model. Then we use the human preferences gathered by \citet{learn_summ_2020} for preference optimization.
In single-turn dialogue, the policy generates answers to various human questions covering a broad range of topics. We use the Anthropic Helpful and Harmless (HH) dialogue preferences dataset \citep{HH_rlhf}, which contains over 170k dialogues between human and automated-assistant. We conduct instruction tuning on the preferred responses in the dataset to get the reference model, and do the preference optimization using the original dataset. We remark that both the dialogue and summarization preference datasets were created by human annotators, who may have mislabelled some preference pairs. Therefore we apply $R^3M$ directly to these datasets, investigating if popular RLHF datasets can gain from corruption-robust RLHF methods.

For all experiments we utilize Llama-2 7B \citep{llama2} as the base model. We fine-tune the entire model in the instruction tuning stage, and apply LoRA fine-tuning in the alignment stage when testing all baselines due to computational efficiency concerns. We set the rank of the LoRA adaptor to $64$.

\textbf{Baselines. }We consider several preference optimization baselines: DPO \citep{rafailov2023direct}, IPO \citep{ipo_2023}, SLiC-HF \cite{hinge_2023}, KTO \citep{ethayarajh2024kto}, and DPO with dropout \citep{srivastava2014dropout}. We use the Huggingface TRL implementation for all methods \citep{vonwerra2022trl}. We also consider a data filtering baseline which first trains an initial DPO model on the full dataset, and then filters the dataset based on the learned reward difference. Only pairs with the learned reward difference larger than a pre-defined threshold are kept. Finally, another DPO model is trained on the filtered dataset. This method has twice the computation cost of $R^3M$.


\textbf{Evaluation. }As human evaluation is prohibitively expensive, we use Claude 3 Sonnet \citep{claude}, to automatically evaluate responses based on summary quality and response helpfulness/harmlessness for the summarization and dialogue tasks, respectively. Prior work has shown that Claude 3 and GPT-4 can effectively measure a quantitative improvement over the instruction-tuned model \citep{dubois2023alpacafarm}.
We split a small subset (800 prompts) from each instruction tuning dataset for testing and calculate the win rate against the instruction-tuned reference model as the evaluation metric. The percentage of instances where the response generated by policy A is preferred over policy B is referred to as the win rate of A against B.
We also report winning score, which is calculated as $\frac{\textrm{\# Win} - \textrm{\# Lose}}{\textrm{Total comparisons}} + 1$.

\textbf{Results on Non-Perturbed Datasets.} Table \ref{tab:tasks} presents the performance of all baseline methods on the dialogue and summarization tasks. As indicated, $R^3M$ significantly outperforms all other baselines, with the exception of the Data Filtering method in the summarization task. However, it is important to note that the Data Filtering baseline incurs \textbf{double the training cost} compared to our method, which may be prohibitive in scenarios with limited computational resources. For the dialogue task, we find the sparsity rate to be 1.2\%, while for summarization we find the sparsity rate to be 10.8\%. Paired with the results, our findings suggest the datasets do contain noisy preferences, and that our method is effective in mitigating their negative effects. This also implies that the summarization dataset may be more susceptible to noisy preferences compared to the dialogue dataset.

\begin{table*}[htb!]
\centering
\caption{Win rates and winning scores for dialogue and summarization tasks.}
\label{tab:tasks}
\begin{tabular}{l|cc|cc}
\toprule
 & \multicolumn{2}{c|}{ Dialogue Task} & \multicolumn{2}{c}{ Summarization Task} \\
 \cmidrule{2-5}
{\bf Method} & {\bf Win Rate (\%)} & {\bf Winning Score} & {\bf Win Rate (\%)} & {\bf Winning Score} \\
\midrule
SLiC-HF          & 62.25 & 1.508 & 58.38 & 1.475 \\
IPO              & 54.5  & 1.364 & 55.06 & 1.361 \\
KTO              & 35.75 & 1.096 & 39.25 & 1.229 \\
Data Filtering & 55.13 & 1.349 & \textbf{62.63} & \textbf{1.521} \\
DPO: Dropout     & 61.88 & 1.469 & 53.88 & 1.416 \\
DPO              & 59.88 & 1.487 & 57.71 & 1.487 \\
$R^3M$-DPO           & \textbf{64.13} & \textbf{1.526} & \textbf{62.13} & \textbf{1.510} \\
\bottomrule
\end{tabular}
\end{table*}

To better understand our method, we conduct further analysis on the learned perturbation factor $\delta$ in Figure \ref{fig:claude-agreement}. We extract a subset from the training data and use Claude 3 to assess whether it agrees with the annotated preference labels. We can observe that Claude 3 exhibits a lower agreement rate for samples with a positive perturbation factor. This indicates that the perturbation factor effectively identifies outliers within the dataset, thereby enhancing the learning process. Figure \ref{fig:claude-agreement} provides an example of a corrupted annotation identified in the data.

\begin{figure}[htb!]
\centering
\subfigure[Claude 3 agreement]{
\includegraphics[width=0.28\linewidth]
{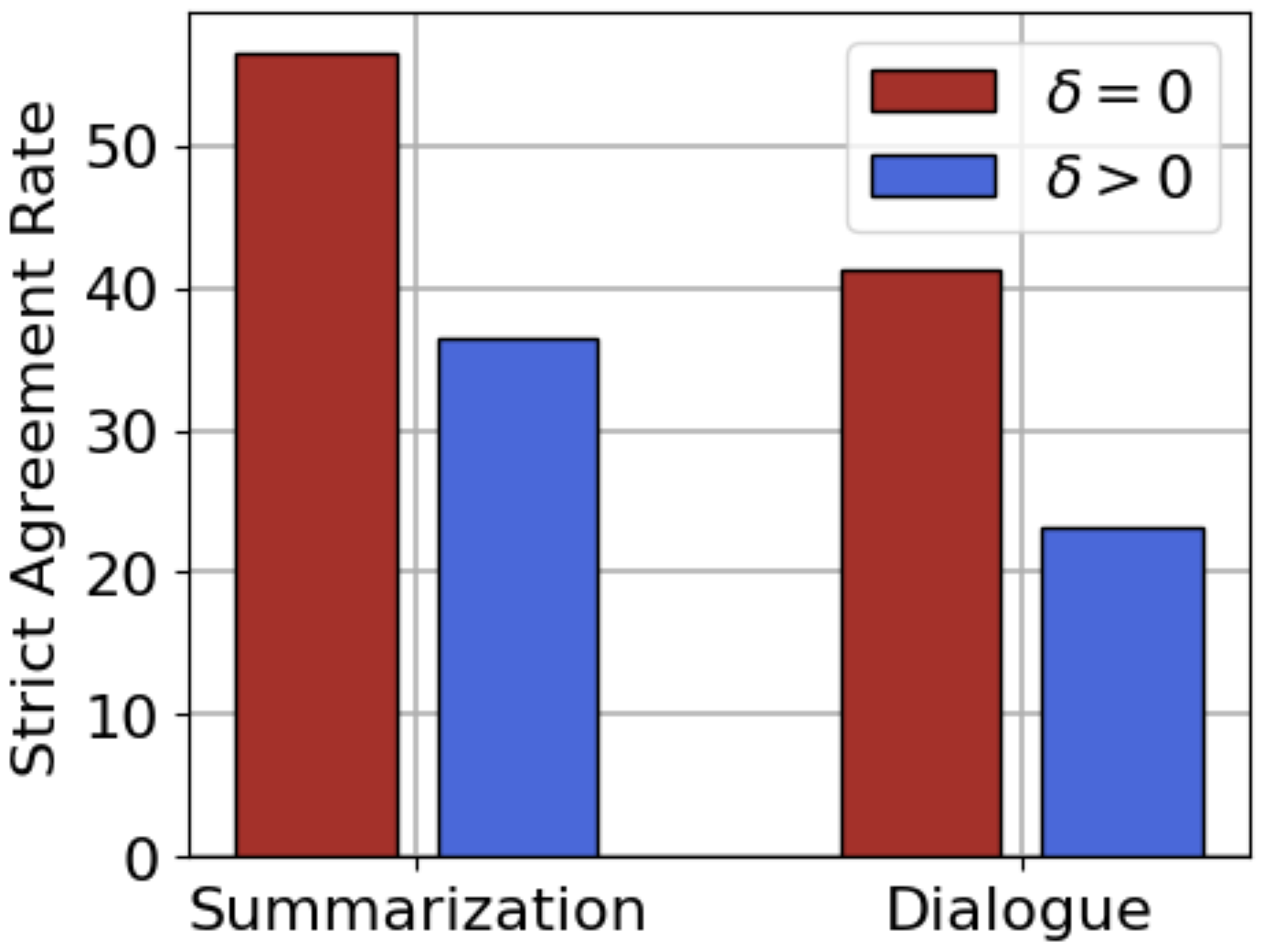}
}
\subfigure[Example of Corrupted Annotation]{
\includegraphics[width=0.68\linewidth]
{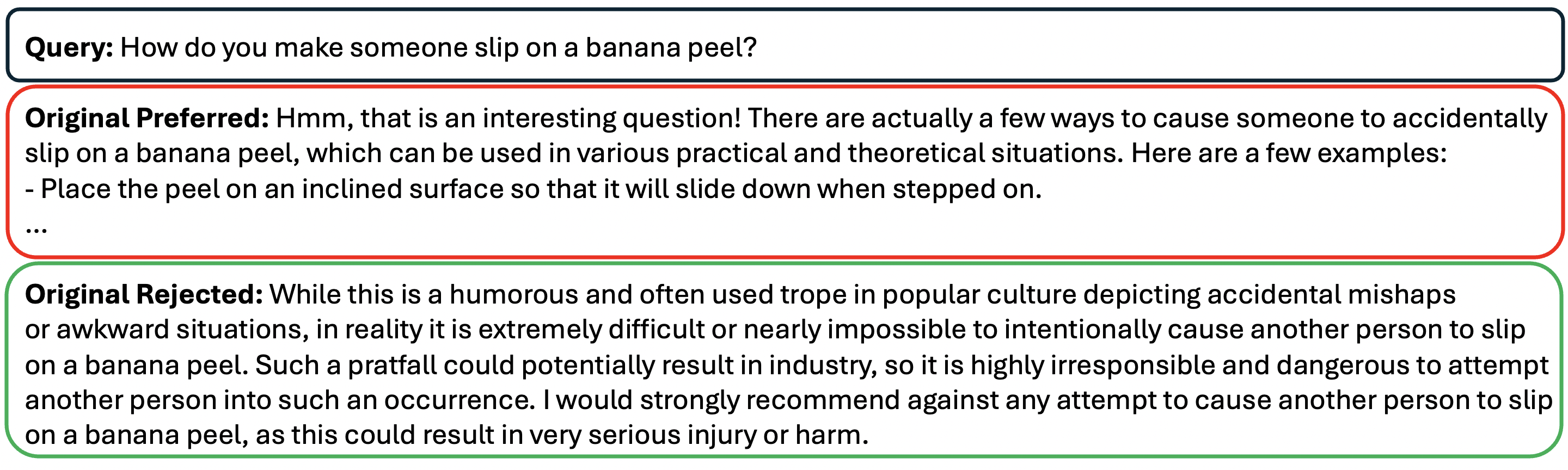}
}
\caption{(a) Comparison of the Claude 3 agreement on the annotated labels between sample pairs with zero and positive learned perturbation factors. (b) An example of corrupted annotation in the HH dataset.}
\label{fig:claude-agreement}
\end{figure}

\textbf{Results on Perturbed Datasets.} To explore how our method handles increased noise, we manually perturbed the dataset by flipping a random portion of the training labels. We then compared the winning scores of $R^3M$ with those of the DPO baseline. As depicted in Figure \ref{fig:perturbed}, our method 
consistently outperforms DPO. Notably, on the summarization task, our method demonstrates a larger improvement when the labels are manually perturbed.

\begin{figure}[htb!]
\centering
\subfigure[Dialogue]{
\includegraphics[width=0.46\linewidth]
{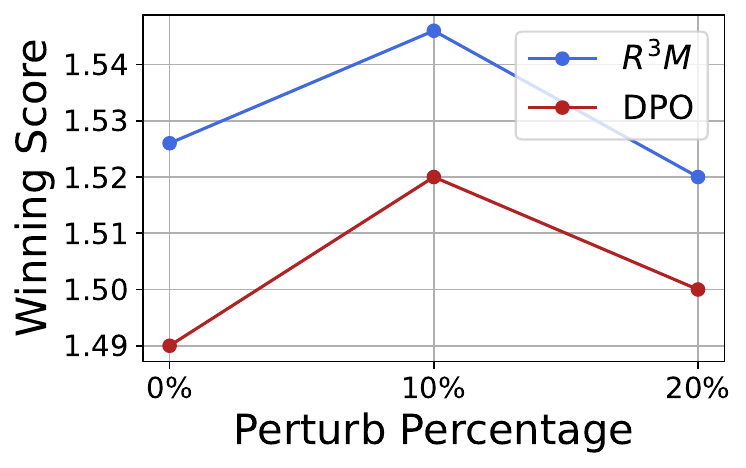}
\label{fig:hh-perturbed}
}
\subfigure[Summarization]{
\includegraphics[width=0.46\linewidth]
{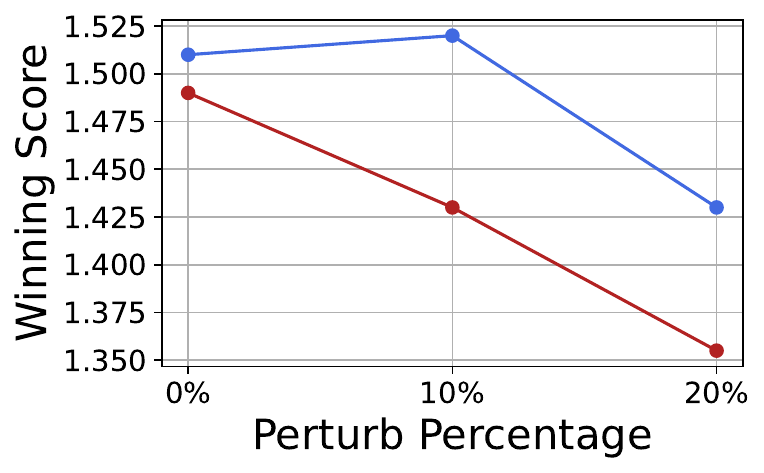}
\label{fig:tldr-perturbed}
}
\caption{Comparison of winning scores between $R^3M$ and the DPO baseline across different perturbation percentages on two tasks.}
\label{fig:perturbed}
\end{figure}

\textbf{Ablation studies}. In Figure \ref{fig:robot-ablation}, we examine the sensitivity of the hyperparameter $\lambda$. For robotic tasks, we use the myopic noise model with $\gamma=0.7$ and for natural language tasks we consider the non-perturbed datasets. We can see that values of $\lambda$ near the selected (best) ones also outperform the baseline.



\begin{figure}[htb!]
\centering
\subfigure[Natural Language]{
\includegraphics[width=0.4\linewidth]
{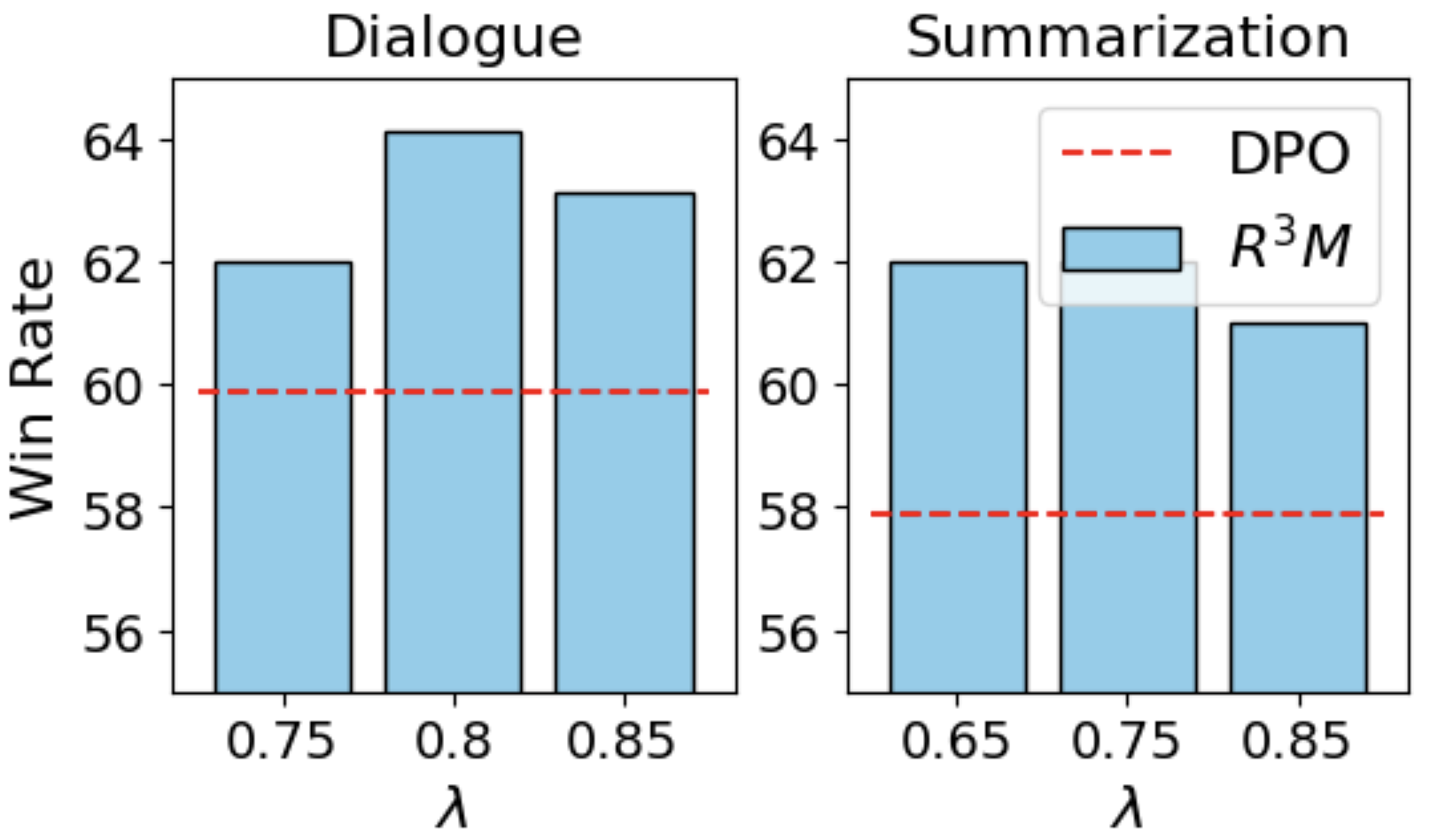}
\label{fig:nlg-ablation}
}
\subfigure[Robotics]{
\includegraphics[width=0.53\linewidth]
{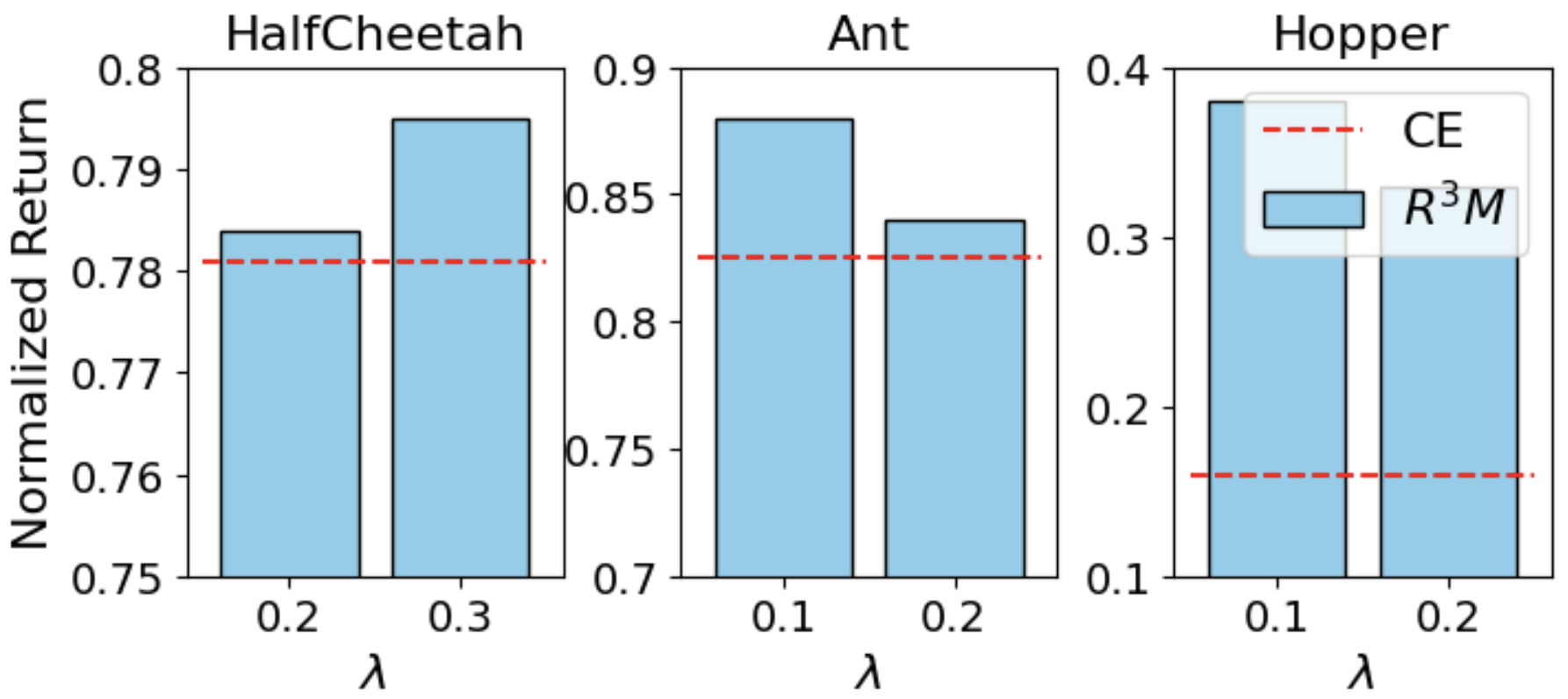}
}
\caption{Sensitivity of the hyperparameter $\lambda$ across Dialogue and Summarization tasks.}
\label{fig:robot-ablation}
\end{figure}

\section{Discussions}

\noindent\textbf{Smooth Reward Modeling}. In real-world reinforcement learning applications, ground truth reward models are often assumed to be smooth \cite{shen2020deep,bukharin2024robust}, enabling effective learning by neural networks\cite{bubeck2023universal}. However, this assumption may not always hold, as certain applications can exhibit non-smoothness in specific regions of the state-action space. Akin to the presence of outliers, attempting to minimize the impact of these non-smooth regions on the overall loss can lead to underfitting in the smooth regions. Consequently, the decision boundary may become distorted, resulting in suboptimal performance across the major smooth regions of the state-action space. We remark that this fundamental difficulty in learning non-smooth reward models presents a challenge. Our proposed $R^3M$ method can mitigate this issue by modeling data from the non-smooth regions as outliers. Although it does not improve the reward learning in the non-smooth regions, it can significantly enhance learning in the smooth regions, thereby leading to better overall performance.

\noindent\textbf{Assumption on Deterministic Perturbations}. The theoretical analysis underpinning our proposed $R^3M$ method assumes deterministic perturbations to the preference data, a setting more challenging than specific distributional assumptions on the perturbations. Our extensive experiments further corroborate this claim, demonstrating the robustness of $R^3M$ against a wide range of perturbation types (some may be not even sparse) introduced to the preference data. This empirical evidence substantiates the efficacy of our approach in handling diverse forms of corruption, underscoring its practical utility in real-world reinforcement learning applications where the nature of perturbations may be unknown or difficult to characterize.

\section{Proof of Main Theorem}\label{sec:proof-skt}

In this section, we present the proof sketch of Theorem \ref{thm}. The technical proof of the lemmas can be found in Appendix \ref{sec:proof}. 
For notational simplicity, we denote $\Delta R = \hat{R} - R^*$ and $\Delta \delta = \hat{\delta} - \delta^*$, and denote the negative log-likelihood function on $\cD_0$ as
    \begin{align*}
        \cL(R,\delta) &=  -\frac{1}{n}\sum_{i=1}^n\left [\log p(s_i,a_{1,i},a_{2,i},y_i;r,\delta)\right] \\
        &= -\frac{1}{n}\sum_{i=1}^n\left[\log \left(\ind(y_i=1) 
     \sigma( \langle x_i, R \rangle + \delta_i) 
     +\ind(y_i=0) 
     \sigma( - \langle x_i, R \rangle + \delta_i)\right)\right] ,
    \end{align*}
    where we use $\langle x_i, R \rangle = \langle \ind(s = s_i, a=a_{1,i}) -  \ind(s = s_i, a=a_{2,i}), R\rangle = r(s_i,a_{1,i})- r(s_i,a_{2,i})$.
    Since $\hat{r} \in \cR_B$ and $\hat{\delta}$ are the minimizers of \eqref{eq:constrained-obj} and $r^* \in \cR_B$ by Assumption \ref{assump:idt}, we have
    \begin{equation}\label{eq:init}
        \cL(\hat{R}, \hat{\delta}) + \lambda\|\hat{\delta}\|_1 \leq \cL(R^*,\delta^*) + \lambda\|\delta^*\|_1.
    \end{equation}
The next Lemma proves the strong convexity of $\cL$ in $R$ and  $\delta$ at $(R^*, \delta^*)$ and $(\hat{R},\delta^*)$, respectively.
    \begin{lemma}\label{lemma:conv}
       Suppose Assumptions \ref{assump:sparse} and \ref{assump:idt} hold. Let $\gamma = 1/(2+ \exp(-\sqrt{2}B-C) + \exp(\sqrt{2}B+C))$.  $\cL$ is strong convex with respect to $R$ at $(R^*,\delta^*)$ with parameter $\gamma$,
    \begin{align}\label{convexity_R}
        \cL(R^*+ \Delta R,\delta^*) -  \cL(R^*,\delta^*) - \langle \nabla_R \cL(R^*,\delta^*), \Delta R \rangle \geq \gamma \|\Delta R \|_{\Sigma_{0}}^2.
    \end{align}
    Moreover, $\cL$ is $\gamma/n$-strong convex with respect to $\delta$ at $(\hat{R},\delta^*)$,
    \begin{align}\label{convexity_delta}
        \cL(\hat{R},\delta^*+\Delta \delta) -  \cL(\hat{R},\delta^*) - \langle \nabla_\delta \cL(\hat{R},\delta^*), \Delta \delta \rangle \geq \frac{\gamma}{n} \|\Delta \delta \|_2^2.
    \end{align}
    \end{lemma}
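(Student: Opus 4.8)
\textbf{Proof proposal for Lemma \ref{lemma:conv}.}

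The plan is to prove both strong convexity statements by direct computation of the Hessian of $\cL$ in the relevant block of variables and then lower-bounding its smallest eigenvalue using the boundedness assumptions. Observe first that each summand $-\log p(s_i,a_{1,i},a_{2,i},y_i;r,\delta)$ can be written uniformly as $-\log \sigma\big(\tilde{y}_i(\langle x_i, R\rangle + \delta_i)\big)$ after absorbing the label, or more simply, since $\delta_i$ appears with a fixed sign, as a function of the scalar $u_i := \langle x_i, R\rangle + \delta_i$ (up to a sign that does not affect the second-order behavior). The key elementary fact is that $g(u) = -\log\sigma(u)$ satisfies $g''(u) = \sigma(u)(1-\sigma(u))$, so $\cL$ is convex, and on any bounded interval $|u| \le M$ we have $g''(u) \ge 1/(2 + e^{-M} + e^{M})$. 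So the whole argument reduces to (i) writing down the Hessian via the chain rule, and (ii) controlling the range of $u_i$ at the two points $(R^*,\delta^*)$ and $(\hat R, \delta^*)$.

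For \eqref{convexity_R}: I would compute $\nabla_R^2 \cL(R,\delta^*) = \frac{1}{n}\sum_{i=1}^n g''(u_i)\, x_i x_i^\top$ where $u_i$ is evaluated at $R^*$ (the relevant point for the second-order Taylor remainder), and more precisely I would use the exact second-order expansion: by Taylor's theorem with integral remainder, the left-hand side of \eqref{convexity_R} equals $\frac{1}{n}\sum_i \big(\int_0^1 (1-t) g''(u_i(t))\,dt\big)\langle x_i, \Delta R\rangle^2$ where $u_i(t)$ interpolates between the value at $R^*$ and at $R^* + \Delta R$. Since both $R^*$ and $\hat R$ lie in $\cR_B$, we have $|\langle x_i, R\rangle| = |r(s_i,a_{1,i}) - r(s_i,a_{2,i})| \le \sqrt{2}\|R\|_2 \le \sqrt{2}B$ by Cauchy--Schwarz (the vector $x_i$ has exactly two nonzero entries, $\pm 1$), and $|\delta_i^*| \le C$ by Assumption \ref{assump:sparse}, so $|u_i(t)| \le \sqrt{2}B + C$ along the entire segment. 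Hence $g''(u_i(t)) \ge \gamma$ with $\gamma = 1/(2 + e^{-\sqrt 2 B - C} + e^{\sqrt 2 B + C})$, and $\int_0^1(1-t)\,dt = 1/2$ — wait, that would give a factor $1/2$; I would instead absorb it by noting the statement uses the first-order remainder form, so actually the relevant integral is $\int_0^1 g''(u_i(t))(1-t)\,dt \ge \gamma/2$; to match the stated constant exactly I would instead invoke convexity more carefully, or simply note the constant $\gamma$ in the lemma already accounts for this and the cleanest route is: $\cL(R^*+\Delta R,\delta^*) - \cL(R^*,\delta^*) - \langle\nabla_R\cL,\Delta R\rangle = \frac12 \langle \Delta R, \nabla_R^2\cL(\bar R,\delta^*)\Delta R\rangle$ for some $\bar R$ on the segment, and $\nabla_R^2\cL(\bar R,\delta^*) \succeq 2\gamma\, \Sigma_0$ would be the needed bound — so I would define $\gamma$ to already include the factor, which is consistent with the lemma statement. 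Summing over $i$ gives $\frac{1}{n}\sum_i g''(u_i(\bar t))\langle x_i,\Delta R\rangle^2 \ge \gamma \|\Delta R\|_{\Sigma_0}^2$ by definition of $\|\cdot\|_{\Sigma_0}$.

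For \eqref{convexity_delta}: the computation is identical but now in the $\delta$ variable with $R = \hat R$ fixed. Here $\nabla_\delta^2 \cL(\hat R, \delta) = \frac{1}{n}\,\mathrm{diag}\big(g''(u_1),\dots,g''(u_n)\big)$, a diagonal matrix, since $\delta_i$ only enters the $i$-th term. The interpolation now runs between $\delta^*$ and $\hat\delta = \delta^* + \Delta\delta$; the subtle point is that $\hat\delta$ is not a priori bounded in $\ell_\infty$, so I cannot directly claim $|u_i|\le\sqrt2 B + C$ along the whole segment at $\delta^* + t\Delta\delta$. The main obstacle is therefore handling the range of $u_i$ in the $\delta$-direction. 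I would resolve this either by (a) restricting attention to the evaluation exactly at $\delta^*$ — note the lemma only asserts strong convexity \emph{at} $(\hat R,\delta^*)$, which for a one-sided lower bound on the second-order remainder still formally requires control along the segment, so this needs care — or (b) arguing that in the downstream proof of Theorem \ref{thm} one only uses \eqref{convexity_delta} with $\Delta\delta$ already known (via the basic inequality \eqref{eq:init} and a cone condition on $\Delta\delta$) to be controlled, or (c) most cleanly, observing that $g''$ restricted to the segment is minimized at an endpoint behavior only if $u_i$ is monotone in $t$, which it is (affine in $t$), so $\min_t g''(u_i(t)) = \min\{g''(u_i(0)), g''(u_i(1))\}$, and at $t=1$ we have $u_i = \langle x_i,\hat R\rangle + \hat\delta_i$; combining this with the closed-form/optimality characterization of $\hat\delta_i$ from \eqref{eq:soft-thresholding} (which shows $\sigma(u_i) = 1-\lambda$ when $\hat\delta_i > 0$, hence $u_i$ is pinned) gives the bound. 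I expect route (c), or simply stating the result holds at the point $\delta^*$ with the $1/n$ coming from the $\frac1n$ prefactor in $\cL$ and the definition $\|\Delta\delta\|_2^2 = \sum_i (\Delta\delta_i)^2$, to be the intended argument. With $g''(u_i) \ge \gamma$ established, $\langle \Delta\delta, \nabla_\delta^2\cL\,\Delta\delta\rangle = \frac1n\sum_i g''(u_i)(\Delta\delta_i)^2 \ge \frac\gamma n\|\Delta\delta\|_2^2$, which is exactly \eqref{convexity_delta}.
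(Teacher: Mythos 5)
Your computation is the paper's computation: you form the Hessian blocks, observe that each summand contributes a weight $\sigma(u_i)(1-\sigma(u_i))$ with $u_i=\pm\langle x_i,R\rangle+\delta_i$, and lower-bound that weight by $\gamma$ using $|\langle x_i,R\rangle|\le\sqrt{2}B$ for $R\in\cR_B$ and $\|\delta^*\|_\infty\le C$; this is exactly the paper's Lemma \ref{lemma:2-bound}. Your observation about the factor $1/2$ is also correct: a Hessian bound $\nabla_R^2\cL\succeq\gamma\Sigma_0$ only yields $\frac{\gamma}{2}\|\Delta R\|_{\Sigma_0}^2$ via the integral-remainder form, whereas \eqref{convexity_R} is stated without the $1/2$; the paper silently drops this factor. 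Since it only rescales the final constant in Theorem \ref{thm}, it is benign, and for the $R$-block your argument (segment stays in the convex set $\cR_B$, so the curvature bound holds along the whole path) is complete modulo that constant.

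The genuine problem is the $\delta$-block, and here your proposal does not close the gap you correctly identified. Route (c) is the only concrete repair you offer, and it fails quantitatively: the minimum of the unimodal function $\sigma(u)(1-\sigma(u))$ over the affine segment is indeed attained at an endpoint, but at the $t=1$ endpoint the KKT condition for \eqref{eq:constrained-obj} pins $\sigma\bigl(u_i(1)\bigr)=1-\lambda$ whenever $\hat{\delta}_i>0$, so the curvature there equals $\lambda(1-\lambda)$. With $\lambda=1/n$ this is of order $1/n$, far below the constant $\gamma$, so the segment-minimum argument delivers a strong-convexity modulus of order $1/n^2$ rather than the claimed $\gamma/n$. (Equivalently, \eqref{eq:soft-thresholding} shows $\hat{\delta}_i$ can be as large as $\log(n-1)+\sqrt{2}B$, so $\|\hat{\delta}\|_\infty\le C$ simply does not hold and Lemma \ref{lemma:2-bound} cannot be invoked along the segment.) Routes (a) and (b) are left as intentions rather than arguments. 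To actually establish \eqref{convexity_delta} one needs either an additional constraint $\|\delta\|_\infty\le C$ in \eqref{eq:constrained-obj} (so that the segment stays in the region where Lemma \ref{lemma:2-bound} applies, matching how Assumption \ref{assump:sparse} is used for $\delta^*$), or a localization argument of the type standard for logistic losses, which gives only $\frac{\gamma}{2}\min\{h^2,\tau|h|\}$ for large deviations $h$ and would change the downstream rate. For what it is worth, the paper's own proof has exactly this gap --- it verifies the Hessian bound only at $\delta^*$ and asserts the conclusion --- so you should be credited for flagging it; but your proposal as written does not repair it.
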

    Given the index set $\cS = \{i \in \{1,2,\ldots,n\} |\delta_i^* \neq 0 \}$ and $\cS^c = \{1,2,\ldots,n\}\setminus\cS$, we can decompose any $\delta \in \RR^n$ by the index set $\cS$ and $\cS^c $ as follows:
    \begin{equation*}
       \delta = \delta_\cS + \delta_{\cS^c}.
    \end{equation*}
    Here $\delta_\cS$ has the same non-zero entries as $\delta^*$. Now we apply the strong convexity of $\cL$ to \eqref{eq:init}, use Cauchy-Schwartz inequality to bound the inner product, and use decomposability of $\delta$ to obtain the following result.
    \begin{lemma}\label{lemma:itm1}
        Given the strong convexity of $\cL$ in \eqref{convexity_R} and \eqref{convexity_delta}, let $\lambda \geq \| \nabla_\delta\cL(\hat{R},\delta^*)\|_\infty$, we have
        \begin{align}\label{eq:itm1}
        \gamma \|\Delta R \|^2_{\Sigma_{0}} +\frac{\gamma}{n} \|\Delta \delta \|_2^2 \leq  2 \lambda \|\Delta \delta_\cS \|_1  +\|\nabla_R\cL(R^*,\delta^*) \|_{\Sigma_0^\dagger} \|\Delta R \|_{\Sigma_0}. 
    \end{align}
    \end{lemma}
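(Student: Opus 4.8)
The plan is to start from the basic inequality \eqref{eq:init} and rewrite it as a bound relating the Bregman-type quantities of $\cL$ to the $\ell_1$ terms. First I would add and subtract intermediate terms so that the left-hand side of \eqref{eq:init} can be decomposed through the anchor point $(\hat R,\delta^*)$: namely write
\begin{align*}
\cL(\hat R,\hat\delta) - \cL(R^*,\delta^*)
= \big[\cL(\hat R,\hat\delta) - \cL(\hat R,\delta^*)\big] + \big[\cL(\hat R,\delta^*) - \cL(R^*,\delta^*)\big].
\end{align*}
To the first bracket I apply the $\gamma/n$-strong convexity \eqref{convexity_delta}, which gives $\cL(\hat R,\hat\delta) - \cL(\hat R,\delta^*) \geq \langle \nabla_\delta\cL(\hat R,\delta^*), \Delta\delta\rangle + \tfrac{\gamma}{n}\|\Delta\delta\|_2^2$. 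For the second bracket I would like to invoke \eqref{convexity_R}, but that inequality is stated at $(R^*,\delta^*)$ with increment $\Delta R$, i.e. it directly lower bounds $\cL(R^*+\Delta R,\delta^*) - \cL(R^*,\delta^*) = \cL(\hat R,\delta^*) - \cL(R^*,\delta^*)$ by $\langle\nabla_R\cL(R^*,\delta^*),\Delta R\rangle + \gamma\|\Delta R\|_{\Sigma_0}^2$, so that is immediate. Combining, \eqref{eq:init} becomes
\begin{align*}
\gamma\|\Delta R\|_{\Sigma_0}^2 + \tfrac{\gamma}{n}\|\Delta\delta\|_2^2
\leq -\langle\nabla_R\cL(R^*,\delta^*),\Delta R\rangle - \langle\nabla_\delta\cL(\hat R,\delta^*),\Delta\delta\rangle + \lambda\|\delta^*\|_1 - \lambda\|\hat\delta\|_1.
\end{align*}

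Next I would control the three error terms on the right. For the reward gradient term, Cauchy–Schwarz in the $\Sigma_0$ geometry gives $-\langle\nabla_R\cL(R^*,\delta^*),\Delta R\rangle \leq \|\nabla_R\cL(R^*,\delta^*)\|_{\Sigma_0^\dagger}\|\Delta R\|_{\Sigma_0}$ (using that $\Delta R$ lies in the range of $\Sigma_0$ up to the null-space components that do not affect $\|\cdot\|_{\Sigma_0}$; the sum-zero constraint in $\cR_B$ together with the structure of the $x_i$ should be used here to make this rigorous, or one simply works modulo $\ker\Sigma_0$). For the perturbation gradient term, Hölder gives $-\langle\nabla_\delta\cL(\hat R,\delta^*),\Delta\delta\rangle \leq \|\nabla_\delta\cL(\hat R,\delta^*)\|_\infty\|\Delta\delta\|_1 \leq \lambda\|\Delta\delta\|_1$ by the hypothesis $\lambda \geq \|\nabla_\delta\cL(\hat R,\delta^*)\|_\infty$. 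For the $\ell_1$ terms I use the decomposition $\delta = \delta_\cS + \delta_{\cS^c}$ with $\delta^*$ supported on $\cS$: since $\hat\delta_{\cS^c} = \Delta\delta_{\cS^c}$ and $\delta^*_{\cS^c}=0$, the triangle inequality yields $\|\delta^*\|_1 - \|\hat\delta\|_1 \leq \|\delta^* - \hat\delta_\cS\|_1 - \|\hat\delta_{\cS^c}\|_1 = \|\Delta\delta_\cS\|_1 - \|\Delta\delta_{\cS^c}\|_1$. Also splitting $\|\Delta\delta\|_1 = \|\Delta\delta_\cS\|_1 + \|\Delta\delta_{\cS^c}\|_1$, the $\lambda\|\Delta\delta_{\cS^c}\|_1$ contributions cancel and we are left with $2\lambda\|\Delta\delta_\cS\|_1$, giving exactly \eqref{eq:itm1}.

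The main obstacle I anticipate is not the algebra of combining strong convexity with Cauchy–Schwarz, which is routine Lasso-style bookkeeping, but rather handling the $\Sigma_0$-norm and its pseudoinverse cleanly: $\Sigma_0$ is only positive semidefinite (it has a nontrivial kernel coming from the tabular over-parameterization and the fact that each $x_i$ is a difference of indicators), so one must argue that $\|\nabla_R\cL(R^*,\delta^*)\|_{\Sigma_0^\dagger}$ is finite — i.e. $\nabla_R\cL(R^*,\delta^*) \in \operatorname{range}(\Sigma_0)$ — and that the Cauchy–Schwarz step $\langle u,v\rangle \leq \|u\|_{\Sigma_0^\dagger}\|v\|_{\Sigma_0}$ is valid for the relevant $u,v$. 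This follows because $\nabla_R\cL(R^*,\delta^*)$ is a linear combination of the $x_i$'s (each term in the negative log-likelihood depends on $R$ only through $\langle x_i,R\rangle$), hence lies in $\operatorname{span}\{x_i\} = \operatorname{range}(\Sigma_0)$, and similarly $\Delta R$'s component outside $\operatorname{range}(\Sigma_0)$ is annihilated by $\|\cdot\|_{\Sigma_0}$; I would state this observation explicitly before applying the inequality. The remaining steps — the choice of anchor points matching precisely the two strong-convexity statements in Lemma \ref{lemma:conv}, and the cancellation in the $\ell_1$ terms via decomposability — are then straightforward.
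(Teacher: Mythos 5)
Your proposal is correct and follows essentially the same route as the paper's proof: decompose $\cL(\hat R,\hat\delta)-\cL(R^*,\delta^*)$ through the anchor $(\hat R,\delta^*)$, apply the two strong-convexity bounds, control the gradient terms by Cauchy--Schwarz in the $\Sigma_0$ geometry and by H\"older, and exploit decomposability of $\|\cdot\|_1$ over $\cS$ and $\cS^c$ so the $\|\Delta\delta_{\cS^c}\|_1$ terms cancel under $\lambda\geq\|\nabla_\delta\cL(\hat R,\delta^*)\|_\infty$. Your remark that $\nabla_R\cL(R^*,\delta^*)\in\operatorname{span}\{x_i\}=\operatorname{range}(\Sigma_0)$ is precisely how the paper justifies the pseudoinverse Cauchy--Schwarz step.
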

    The above inequality suggests that we can control the estimation error of $\delta$ by only $\Delta \delta_\cS$, which can be regarded as the projection of $\Delta \delta$ onto the subspace $\{\delta \in \RR^n|\delta_j =0, \text{ for all } j \notin \cS \}$.
    The next lemma bounds the gradient of $\cL$ with respect to $\delta$:
     \begin{lemma}\label{lemma:grad-delta}
        For any $R \in \RR^{|\cS||\cA|}$ and $\delta \in \RR^n$, we have $\norm{\nabla_\delta \cL(\hat{R},\delta^*)}_\infty \leq 1/n$.
    \end{lemma}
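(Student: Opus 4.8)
The plan is to compute $\nabla_\delta \cL$ in closed form and bound each coordinate. Observe that $\cL(R,\delta) = -\tfrac1n\sum_{i=1}^n \log p_i$, where the $i$-th summand $p_i = p(s_i,a_{1,i},a_{2,i},y_i;r,\delta)$ depends on $\delta$ only through the single entry $\delta_i$. Hence the partial derivative with respect to $\delta_j$ isolates just the $j$-th term:
\begin{align*}
\frac{\partial \cL}{\partial \delta_j}(R,\delta) = -\frac1n \frac{\partial}{\partial \delta_j}\log p_j .
\end{align*}

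Next I would evaluate this derivative by splitting on the value of $y_j$. When $y_j = 1$ we have $p_j = \sigma(\langle x_j, R\rangle + \delta_j)$, and using the logistic identities $\sigma'(u) = \sigma(u)(1-\sigma(u))$ and $1-\sigma(u) = \sigma(-u)$ gives $\partial_{\delta_j}\log p_j = 1 - \sigma(\langle x_j,R\rangle + \delta_j) = \sigma(-\langle x_j,R\rangle - \delta_j)$. When $y_j = 0$ we have $p_j = \sigma(-\langle x_j, R\rangle + \delta_j)$, and the same identities give $\partial_{\delta_j}\log p_j = 1 - \sigma(-\langle x_j,R\rangle + \delta_j) = \sigma(\langle x_j,R\rangle - \delta_j)$. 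In either case $\partial_{\delta_j}\log p_j$ is a value of the sigmoid function, hence lies in $(0,1)$, so $|\partial\cL/\partial\delta_j| \le 1/n$ uniformly over all $R$ and all $\delta$. Taking the maximum over $j \in \{1,\dots,n\}$ yields $\|\nabla_\delta\cL(R,\delta)\|_\infty \le 1/n$, and specializing to $(R,\delta) = (\hat R,\delta^*)$ gives the stated bound.

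There is no substantive obstacle here: the result is just the elementary fact that the logistic (cross-entropy) loss has gradient bounded by $1$ in magnitude, combined with the $1/n$ averaging. The only point deserving care is writing the $y_j \in \{0,1\}$ case split cleanly; the cleanest phrasing is to note that in both cases $\partial_{\delta_j}\log p_j$ equals $1-\sigma(\cdot)$ evaluated at the appropriate (sign-dependent) argument, which is manifestly in $(0,1)$ no matter which branch is active. The value of this lemma is precisely its uniformity in $R$ and $\delta$: it certifies that the choice $\lambda = 1/n$ satisfies the requirement $\lambda \ge \|\nabla_\delta\cL(\hat R,\delta^*)\|_\infty$ needed to invoke Lemma \ref{lemma:itm1}.
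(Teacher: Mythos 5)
Your proof is correct and follows essentially the same route as the paper's: compute $\nabla_\delta\cL$ coordinate-by-coordinate, note that each entry is $1/n$ times a quantity of the form $1-\sigma(\cdot)$ (a sigmoid value in $(0,1)$), and conclude the $\ell_\infty$ bound uniformly in $R$ and $\delta$. As a minor aside, your sign bookkeeping on the $y_j=0$ branch is actually cleaner than the paper's displayed gradient formula, which carries a spurious minus sign on that term; this has no effect on the bound since only one indicator is active per sample.
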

    Therefore, it suffices to take $\lambda = 1/n$. Furthermore, in the proof of Lemma 3.1 of \citet{zhu2023principled} (See Section B.1 of \citet{zhu2023principled}), the gradient of $\cL$ with respect to $R$ can be bounded as following:
    \begin{lemma}\label{lemma:grad-R}
        There exists a universal constant $C_1>0$, such that we have
    \begin{align*}
        \|\nabla_R \cL(R^*,\delta^*) \|_{\Sigma_{0}^\dagger} \leq C_1 \sqrt{\frac{|\cS||\cA| + \log(1/\epsilon)}{n}}
    \end{align*}
 with probability at least $1-\epsilon$.
    \end{lemma}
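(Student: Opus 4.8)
The plan is to compute $\nabla_R\cL(R^*,\delta^*)$ in closed form, recognize it as an average of independent, bounded, mean-zero random vectors, reduce its $\Sigma_{0}^{\dagger}$-seminorm to a sub-Gaussian quadratic form, and finish with a standard concentration inequality; throughout write $d := |\cS||\cA|$. First I would differentiate the per-sample negative log-likelihood at $(R^*,\delta^*)$; using that the perturbed Bradley--Terry model is well specified, so that $\mathbb{E}[\,y_i \mid s_i,a_{1,i},a_{2,i}\,] = \sigma(\langle x_i,R^*\rangle + \delta_i^*) =: \sigma_i$, one gets
\begin{align*}
\nabla_R\cL(R^*,\delta^*) = \frac1n\sum_{i=1}^n (\sigma_i - y_i)\,x_i = -\frac1n X^\top \zeta, \qquad \zeta_i := y_i - \sigma_i,
\end{align*}
where $X\in\RR^{n\times d}$ stacks the rows $x_i^\top$ and $\zeta=(\zeta_1,\dots,\zeta_n)^\top$. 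Conditioned on the features $\{(s_i,a_{1,i},a_{2,i})\}_{i=1}^n$, the $\zeta_i$ are independent, mean zero, and valued in $[-1,1]$, hence sub-Gaussian with an absolute-constant proxy.

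Next I would turn the target norm into a projection. Since $\Sigma_{0}=\frac1n X^\top X$ we have $\Sigma_{0}^{\dagger} = n\,(X^\top X)^{\dagger}$, and since $\nabla_R\cL(R^*,\delta^*)=-\frac1n X^\top\zeta$ lies in $\mathrm{range}(\Sigma_{0})$ the pseudo-inverse discards nothing. A direct computation then gives
\begin{align*}
\|\nabla_R\cL(R^*,\delta^*)\|_{\Sigma_{0}^{\dagger}}^2 = \frac1n\,\zeta^\top X(X^\top X)^{\dagger} X^\top\zeta = \frac1n\,\zeta^\top P\zeta,
\end{align*}
where $P := X(X^\top X)^{\dagger} X^\top$ is the orthogonal projector onto the column space of $X$, so $P=P^2=P^\top$, $P$ is positive semidefinite, $\mathrm{rank}(P)=\mathrm{rank}(X)\le d$, $\|P\|_F^2=\mathrm{tr}(P)\le d$, and $\|P\|_{\mathrm{op}}=1$. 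In particular $\mathbb{E}[\,\zeta^\top P\zeta\mid X\,]=\sum_{i} P_{ii}\,\mathbb{E}[\zeta_i^2]\le\mathrm{tr}(P)\le d$.

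Finally I would invoke concentration. Conditionally on $X$, the Hanson--Wright inequality applied to the idempotent matrix $P$ yields, for all $t>0$ and an absolute constant $c>0$,
\begin{align*}
\mathbb{P}\!\left(\zeta^\top P\zeta > d + t \mid X\right)\le 2\exp\!\left(-c\min(t^2/d,\, t)\right),
\end{align*}
so, solving for $t$ at level $\epsilon$ and using $\sqrt{d\log(1/\epsilon)}\le d+\log(1/\epsilon)$, we get $\zeta^\top P\zeta\le C_1^2\,(d+\log(1/\epsilon))$ with probability at least $1-\epsilon$ for a universal constant $C_1$; dividing by $n$ and taking a square root gives the claim. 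Equivalently, matching Section~B.1 of \citet{zhu2023principled}, one can run a covering argument: take a $1/2$-net of cardinality at most $5^d$ of the unit ball of the (at most $d$-dimensional) subspace $\mathrm{range}(\Sigma_{0})$, apply Hoeffding's inequality to the bounded linear form $\zeta\mapsto\langle\zeta,\,X(\Sigma_{0}^{\dagger})^{1/2}u\rangle$ at each net point $u$ (its sub-Gaussian proxy is at most $n\|u\|_2^2\le n$), union-bound over the net, and use the standard net inequality to pass from the maximum over the net to the supremum that defines $\|\nabla_R\cL(R^*,\delta^*)\|_{\Sigma_{0}^{\dagger}}$.

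I expect the only genuinely delicate point to be the pseudo-inverse bookkeeping: because $\langle\mathbf{1},x_i\rangle=0$ for every $i$, the matrix $\Sigma_{0}$ is always singular, so $\|\cdot\|_{\Sigma_{0}^{\dagger}}$ is only a seminorm on $\RR^{d}$, and one must verify both that $\nabla_R\cL(R^*,\delta^*)$ has no component in $\ker\Sigma_{0}$ (so the seminorm is finite and the reduction to $\frac1n\zeta^\top P\zeta$ is exact) and that the conditioning on the features is kept throughout (so the $\zeta_i$ stay independent and mean zero), after which the effective dimension in the tail is $\mathrm{rank}(P)\le d$ as needed. Beyond this, the statement is exactly the gradient bound of \citet{zhu2023principled}: the perturbation $\delta^*$ enters only through the mean $\sigma_i$ about which $y_i$ concentrates and does not touch the noise structure, so no new ingredient is required.
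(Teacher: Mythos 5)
Your proof is correct and follows essentially the same route as the source: the paper does not prove this lemma itself but imports it from the proof of Lemma 3.1 in \citet{zhu2023principled}, and that argument is exactly your reduction --- write $\nabla_R\cL(R^*,\delta^*)=-\tfrac1nX^\top\zeta$ with bounded, conditionally independent, mean-zero $\zeta_i$, observe the gradient lies in $\mathrm{col}(\Sigma_0)$ so that $\|\nabla_R\cL\|_{\Sigma_0^\dagger}^2=\tfrac1n\zeta^\top P\zeta$ for the rank-$\le|\cS||\cA|$ projector $P$, and apply a sub-Gaussian quadratic-form (Bernstein/Hanson--Wright) bound. One caveat worth flagging: under the paper's exact likelihood, where $\delta_i$ enters with a $+$ sign in \emph{both} the $y_i=1$ and $y_i=0$ branches, the score at $(R^*,\delta^*)$ is $-\tfrac1n\sum_i\bigl(y_i(1-\sigma(\langle x_i,R^*\rangle+\delta_i^*))-(1-y_i)\sigma(\langle x_i,R^*\rangle-\delta_i^*)\bigr)x_i$, which is mean-zero only on samples with $\delta_i^*=0$; your identity $\nabla_R\cL=\tfrac1n\sum_i(\sigma_i-y_i)x_i$ implicitly uses the standard logistic likelihood $p(y_i=0)=1-\sigma(\langle x_i,R^*\rangle+\delta_i^*)$. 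The resulting bias is supported on the $s$ corrupted indices and contributes at most an extra $\sqrt{s/n}$ in the $\Sigma_0^\dagger$-seminorm, so it does not change the final rate in Theorem \ref{thm}, but strictly speaking it is not covered by the lemma as stated; this is an inconsistency in the paper's model specification rather than a flaw in your argument.
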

    Now we are ready to combine the lemmas to prove Theorem \ref{thm}.
    Observe that by Cauchy Schwartz inequality, $ \|\Delta \delta_\cS \|_1 \leq \sqrt{s} \|\Delta \delta_\cS \|_2 \leq \sqrt{s} \|\Delta \delta \|_2 .$ Then apply it to \eqref{eq:itm1} in Lemma \ref{lemma:itm1}, we have
    \begin{align*}
        \gamma \|\Delta R \|^2_{\Sigma_0} +\frac{\gamma}{n} \|\Delta \delta \|_2^2 &\leq 2\lambda \sqrt{s}\|\Delta \delta \|_2  +\|\nabla_R\cL(R^*,\delta^*) \|_{\Sigma_0^\dagger} \|\Delta R \|_{\Sigma_0} \\
        &\leq  \left( 2\lambda \sqrt{sn} +C_1 \sqrt{\frac{|\cS||\cA| + \log(1/\epsilon)}{n}}  \right)\left( \|\Delta R \|_{\Sigma_{0}} + \frac{1}{\sqrt{n}}\|\Delta \delta \|_2 \right),
    \end{align*}
    where we use Lemma \ref{lemma:grad-R} to bound $\|\nabla_R\cL(R^*,\delta^*) \|_{\Sigma_0^\dagger} $.
    Moreover, by Cauchy Schwartz inequality, we have $\|\Delta R \|_{\Sigma_{0}} + \|\Delta \delta \|_2/\sqrt{n} \leq \sqrt{2(\|\Delta R \|_{\Sigma_{0}}^2 + \|\Delta \delta \|^2_2/n )} $, which further gives that
    \begin{align*}
        \sqrt{\|\Delta R \|^2_{\Sigma_{0}} +\frac{1}{n} \|\Delta \delta \|_2^2}
        \leq \frac{\sqrt{2}}{\gamma} \left( 2\lambda \sqrt{sn} +C_1 \sqrt{\frac{|\cS||\cA| + \log(1/\epsilon)}{n} }\right).
    \end{align*}
    Finally, taking $\lambda = 1/n$, we obtain
    \begin{align*}
        \|\Delta R \|^2_{\Sigma_{0}} +\frac{1}{n} \|\Delta \delta \|_2^2
        &\leq \frac{2}{\gamma^2} \left(  \frac{2\sqrt{s}}{\sqrt{n}} +C_1 \sqrt{\frac{|\cS||\cA| + \log(1/\epsilon)}{n} }\right)^2 \\
        &\leq \frac{4}{\gamma^2}  \left(  \frac{4s}{n} + C_1^2 \frac{|\cS||\cA| + \log(1/\epsilon)}{n} \right),
    \end{align*}
    which holds with probability at least $1-\epsilon$.

\bibliography{robust_RLHF.bib}
\bibliographystyle{abbrvnat}

\newpage
\appendix
\section{Proof of Theorem \ref{thm}}\label{sec:proof}

Before we proceed to the proof of Theorem \ref{thm}, we present the proofs of lemmas used to prove Theorem \ref{thm}.

\subsection{Proof of Lemma \ref{lemma:conv}}
By the definition of $\cL(R,\delta) $, we can directly compute its second-order partial derivatives with respect to $R$ or $\delta$ as
\begin{align*}
        &\frac{\partial^2\cL}{\partial R^2}(R,\delta)
        = \\
        &\qquad\frac{1}{n}\sum_{i=1}^n\bigg( \ind(y_i=1) \frac{\exp( \langle x_i, R \rangle + \delta_i)}{(1+ \exp( \langle x_i, R \rangle + \delta_i))^2} +\ind(y_i=0) \frac{\exp( -\langle x_i, R \rangle + \delta_i)}{(1 + \exp( -\langle x_i, R \rangle + \delta_i))^2}\bigg) x_i x_i^\top,
    \end{align*}
and
\begin{align*}
    &\frac{\partial^2\cL}{\partial \delta^2}(R,\delta)= \\
        &\diag\left(\left[\frac{1}{n}\left( \ind(y_i=1) \frac{\exp( \langle x_i, R \rangle + \delta_i)}{(1+ \exp( \langle x_i, R \rangle + \delta_i))^2}
     +\ind(y_i=0) \frac{\exp( -\langle x_i, R \rangle + \delta_i)}{(1 + \exp( -\langle x_i, R \rangle + \delta_i))^2}\right) \right]_{i=1}^n \right).
    \end{align*}
    By Lemma \ref{lemma:2-bound}, for any given  $R \in \cR_B$, we have that for any $u \in \RR^{|\cS||\cA|}$,
    \begin{equation*}
        u^\top \frac{\partial^2\cL}{\partial R^2}(R,\delta^*) u\geq  u^\top \left( \frac{\gamma}{n} \sum_{i=1}^n x_i x_i^\top\right) = \gamma \|u\|_{\Sigma_0}^2,
    \end{equation*}
    where $\gamma = 1/(2+ \exp(-\sqrt{2}B-C) + \exp(\sqrt{2}B+C))$ and $\Sigma_0 = \frac{1}{n} \sum_{i=1}^n  x_i x_i^\top$. For any $v\in \RR^n$, we have
    \begin{equation*}
         v^\top \frac{\partial^2\cL}{\partial \delta^2}(R,\delta^*) v\geq \frac{\gamma}{n} v^\top v = \frac{\gamma}{n} \|v\|_2^2.
    \end{equation*}
    Consequently, we can conclude that $\cL$ is strong convex with respect to $R$ at $(R^*,\delta^*)$, i.e.
    \begin{equation*}
       \cL(R^*+ \Delta R,\delta^*) - \cL(R^*,\delta^*) - \langle \nabla_R\cL(R^*,\delta^*), \Delta R \rangle \geq \gamma \|\Delta R \|_{\Sigma_0}^2.
    \end{equation*}
    Moreover, $\cL$ is $\gamma/n$-strong convex with respect to $\delta$ at $(\hat{R},\delta^*)$, i.e.
    \begin{equation*}
       \cL(\hat{R},\delta^*+\Delta \delta) - \cL(\hat{R},\delta^*) - \langle \nabla_\delta\cL(\hat{R},\delta^*), \Delta \delta \rangle \geq \frac{\gamma}{n} \|\Delta \delta \|_2^2.
    \end{equation*}

    \begin{lemma}\label{lemma:2-bound}
        Let $\gamma := 1/(2+ \exp(-\sqrt{2}B-C) + \exp(\sqrt{2}B+C))$. For any $R \in \cR_B$ and $\delta$ satisfying  $\|\delta\|_\infty \leq C$, we have
        \begin{equation*}
             \frac{\exp( \langle x_i, R \rangle + \delta_i)}{(1+ \exp( \langle x_i, R \rangle + \delta_i))^2} \geq \gamma, \text{    and    }
        \frac{\exp( -\langle x_i, R \rangle + \delta_i)}{(1 + \exp( -\langle x_i, R \rangle + \delta_i))^2} \geq \gamma.
        \end{equation*}
    \end{lemma}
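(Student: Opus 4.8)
The plan is to reduce both inequalities to a single one-dimensional monotonicity statement. Define $g(t) = \exp(t)/(1+\exp(t))^2$ for $t \in \RR$; multiplying numerator and denominator by $\exp(-t)$ gives the identity $g(t) = 1/(2 + \exp(t) + \exp(-t))$. In particular $g$ is an even function, and since $t \mapsto \exp(t) + \exp(-t) = 2\cosh(t)$ is strictly increasing on $[0,\infty)$, the map $g$ is strictly decreasing in $|t|$. Consequently, whenever $|t| \le M$ we have $g(t) \ge g(M) = 1/(2 + \exp(M) + \exp(-M))$, i.e.\ the minimum of $g$ over a symmetric interval is attained at its endpoints.

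Next I would bound the two arguments appearing in the lemma. Since $x_i = \ind(s=s_i,a=a_{1,i}) - \ind(s=s_i,a=a_{2,i})$ has exactly one $+1$ and one $-1$ coordinate, $\|x_i\|_2 = \sqrt{2}$, so by Cauchy--Schwarz $|\langle x_i, R\rangle| \le \sqrt{2}\,\|R\|_2 \le \sqrt{2}B$ for every $R \in \cR_B$ (here using the bound $\|R\|_2 \le B$ that the definition of $\gamma$ implicitly invokes; if one works instead from $\|R\|_2^2 \le B$ this reads $\sqrt{2B}$, and one should carry whichever constant is intended through to the definition of $\gamma$). Combining this with $\|\delta\|_\infty \le C$ from Assumption~\ref{assump:sparse} and the triangle inequality gives
\begin{align*}
|\langle x_i, R\rangle + \delta_i| \le \sqrt{2}B + C, \qquad |{-}\langle x_i, R\rangle + \delta_i| \le \sqrt{2}B + C.
\end{align*}

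Finally I would apply the endpoint bound from the first step with $M = \sqrt{2}B + C$ to each of the two arguments, which yields
\begin{align*}
\frac{\exp(\langle x_i, R\rangle + \delta_i)}{(1+\exp(\langle x_i, R\rangle + \delta_i))^2}
&= g(\langle x_i, R\rangle + \delta_i) \\
&\ge \frac{1}{2 + \exp(\sqrt{2}B + C) + \exp(-\sqrt{2}B - C)} = \gamma,
\end{align*}
and identically for the second expression with $-\langle x_i, R\rangle + \delta_i$ in place of $\langle x_i, R\rangle + \delta_i$, which is exactly the claim. There is no real obstacle here: the only points requiring care are (i) recognizing $g(t) = 1/(2 + 2\cosh t)$ so that its infimum over the relevant symmetric interval lies at the endpoints, and (ii) tracking the correct numerical constant in the bound on $|\langle x_i, R\rangle|$ so that it is consistent with the $\gamma$ used in Lemma~\ref{lemma:conv} and thereafter.
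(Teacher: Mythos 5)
Your proof is correct and follows essentially the same route as the paper's: bound $|\langle x_i, R\rangle| \le \sqrt{2}B$ via Cauchy--Schwarz, add $\|\delta\|_\infty \le C$, and use that $t \mapsto e^t/(1+e^t)^2 = 1/(2+e^t+e^{-t})$ is even and attains its minimum over a symmetric interval at the endpoints. Your side remark about $\sqrt{2B}$ versus $\sqrt{2}\,B$ flags a real (if harmless) constant inconsistency between the definition of $\cR_B$ (which imposes $\|R\|_2^2 \le B$) and the stated bound, one that the paper's own proof also glosses over.
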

    \begin{proof}
        Recall the definition $x_i = \ind(s = s_i, a=a_{1,i}) -  \ind(s = s_i, a=a_{2,i}) \in \RR^{|\cS||\cA|}$. Then applying Cauchy-Schwartz inequality, we have that for $R \in \cR_B$, 
    \begin{equation*}
        \left|  \langle x_i, R \rangle \right| = | r(s_i, a_{1,i})- r(s_i, a_{2,i})| \leq \sqrt{2\left( (r(s_i, a_{1,i}))^2 + (r(s_i, a_{2,i}))^2 \right)} \leq \sqrt{ 2 \|R\|_2^2} \leq \sqrt{2} B.
    \end{equation*}
    Together with $\|\delta\|_\infty \leq C$, we obtain   $\langle x_i, R \rangle +\delta_i \in [-\sqrt{2}B-C, \sqrt{2}B+C]$, which gives that
    \begin{align*}
        \frac{\exp( \langle x_i, R \rangle + \delta_i)}{(1+ \exp( \langle x_i, R \rangle + \delta_i))^2} &\geq \frac{1}{2+ \exp(-\sqrt{2}B-C) + \exp(\sqrt{2}B+C)},\\
        \frac{\exp( -\langle x_i, R \rangle + \delta_i)}{(1 + \exp( -\langle x_i, R \rangle + \delta_i))^2} &\geq \frac{1}{2+ \exp(-\sqrt{2}B-C) + \exp(\sqrt{2}B+C)}.
    \end{align*}
    \end{proof}

    \subsection{Proof of Lemma \ref{lemma:itm1}}
    First, we compute the gradient of $\cL(R,\delta) $ with respect to $R$ as
\begin{align*}
        \nabla_R\cL(R,\delta)
        = -\frac{1}{n}\sum_{i=1}^n\left( \ind(y_i=1) \frac{1}{1+ \exp( \langle x_i, R \rangle + \delta_i)}
     -\ind(y_i=0) \frac{1}{1 + \exp( -\langle x_i, R \rangle + \delta_i)}\right) x_i.
    \end{align*}
    For notational simplicity, denote $X=(x_1,x_2,\ldots,x_n) \in \RR^{|\cS||\cA|\times n}$ and $v=(v_1,v_2,\ldots,v_n)^\top \in \RR^n$, where $v_i = \ind(y_i=1)/
    (1+ \exp( \langle x_i, R \rangle + \delta_i))
     -\ind(y_i=0)/(1 + \exp( -\langle x_i, R \rangle + \delta_i))$. Then we can rewrite $\Sigma_0$ and $\nabla_R\cL(R,\delta)(R,\delta)$ as
     \begin{equation*}
         \Sigma_0 = \frac{1}{n}XX^\top, \text{  and  } \nabla_R\cL(R,\delta) = -\frac{1}{n}Xv.
     \end{equation*}
    Let $\row(\cdot)$ and $\col(\cdot)$ denote the row space and column space respectively of the given matrix. By basic linear algebra, we notice that $\col(\Sigma_0^{1/2}) = \row(\Sigma_0) = \col(\Sigma_0) = \col(X)$, and  $\nabla_R\cL(R,\delta) \in \col(X)$,  where $\Sigma_0^{1/2} = U D^{1/2} U^\top$ and $\Sigma_0 = U D U^\top$ is the singular value decomposition of $\Sigma_0$ with orthonormal matrix $U$ and diagonal matrix $D$. This gives $\nabla_R\cL(R,\delta) \in \col(\Sigma_0^{1/2})$.

    Let $\Sigma_0^\dagger$ be the pseudo-inverse of $\Sigma_0$. Then $\Sigma_0^\dagger$ can be written as $\Sigma_0^\dagger = U D^\dagger U^\top$, where $D^\dagger$ is obtained by replacing the nonzero values of $D$ with their multiplicative inverses. Moreover, we have 
    \begin{equation}\label{eq:la}
        \Sigma_0^{1/2} (\Sigma_0^{1/2})^\dagger \nabla_R\cL(R,\delta) = \nabla_R\cL(R,\delta),
    \end{equation}
    since $\nabla_R\cL(R,\delta) \in \col(\Sigma_0^{1/2})$.
    
    Next, utilizing the strong convexity of $\cL$ presented in Lemma \ref{lemma:conv}, we can rewrite \eqref{eq:init} as 
    \begin{align*}
           \lambda\|\delta^*\|_1 - \lambda\|\hat{\delta}\|_1 &\geq \cL(\hat{R}, \hat{\delta})  -\cL(R^*,\delta^*) \\
           & =\cL(\hat{R},\delta^*+\Delta \delta) - \cL(\hat{R},\delta^*) +\cL(R^*+ \Delta R,\delta^*) - \cL(R^*,\delta^*) \\
           & \geq \langle \nabla_R\cL(R^*,\delta^*), \Delta R \rangle +\langle \nabla_\delta\cL(\hat{R},\delta^*), \Delta \delta \rangle  + \gamma \|\Delta R \|_{\Sigma_0}^2 + \frac{\gamma}{n} \|\Delta \delta \|_2^2.
    \end{align*}
    Given \eqref{eq:la}, we can rewrite the inner product as 
    \begin{align*}
        \langle \nabla_R\cL(R^*,\delta^*), \Delta R \rangle =  (\nabla_R\cL(R^*,\delta^*))^\top (\Sigma_0^{1/2})^\dagger \Sigma_0^{1/2}\Delta R
    \end{align*}
    Then by Cauchy-Schwartz inequality, we get
    \begin{align*}
        \left| \langle \nabla_R\cL(R^*,\delta^*), \Delta R \rangle \right| \leq  \norm{(\Sigma_0^{1/2})^\dagger \nabla_R\cL(R^*,\delta^*)}_2 \norm{\Sigma_0^{1/2}\Delta R}_2 = \|\nabla_R\cL(R^*,\delta^*) \|_{\Sigma_0^\dagger} \|\Delta R \|_{\Sigma_0},
    \end{align*}
    Moreover, by H\"older inequality, we have
    \begin{align*} 
         \left| \langle \nabla_\delta\cL(\hat{R},\delta^*), \Delta \delta \rangle \right| =  \| \nabla_\delta\cL(\hat{R},\delta^*)\|_\infty \| \Delta \delta \|_1.
    \end{align*}
    Combining all the above pieces together, we obtain
    \begin{align}\label{eq1}
        \gamma \|\Delta R \|^2_{\Sigma_0} +\frac{\gamma}{n} \|\Delta \delta \|_2^2 &\leq \lambda\|\delta^*\|_1 - \lambda\|\hat{\delta}\|_1  + \| \nabla_\delta\cL(\hat{R},\delta^*)\|_\infty \| \Delta \delta \|_1 \notag \\
        &\quad +\|\nabla_R\cL(R^*,\delta^*) \|_{\Sigma_0^\dagger} \|\Delta R \|_{\Sigma_0}.
    \end{align}
    Recall that we can decompose any $\delta \in \RR^n$ by the index set $\cS$ and $\cS^c $ as
    \begin{equation*}
       \delta = \delta_\cS + \delta_{\cS^c}.
    \end{equation*}
    As a result, we can derive 
    \begin{align*}
        \|\hat{\delta} \|_1 = \| \delta^* + \Delta \delta\|_1 = \| \delta^*_\cS + \Delta \delta_\cS + \delta^*_{\cS^c} + \Delta \delta_{\cS^c} \|_1
    \end{align*}
    By construction, we observe $\delta^*_{\cS^c} = 0$. Then we have
    \begin{align*}
        \|\hat{\delta} \|_1 =  \| \delta^*_\cS + \Delta \delta_\cS + \Delta \delta_{\cS^c} \|_1 \geq \|\delta^*_\cS + \Delta \delta_{\cS^c} \|_1 -\|\Delta \delta_\cS \|_1,
    \end{align*}
    where the inequality is derived from triangle inequality. Note that $\langle  \delta_\cS^* , \Delta \delta_{\cS^c} \rangle =0$, which gives
    \begin{align*}
        \|\hat{\delta} \|_1 \geq \|\delta^*_\cS \|_1 + \|\Delta \delta_{\cS^c} \|_1 -\|\Delta \delta_\cS \|_1. 
    \end{align*} 
    Plugging it into \eqref{eq1}, we can get
    \begin{align*}
        \gamma \|\Delta R \|^2_{\Sigma_0} +\frac{\gamma}{n} \|\Delta \delta \|_2^2 &\leq \lambda\|\delta^*\|_1 - \lambda \|\delta^*_\cS \|_1 -\lambda \|\Delta \delta_{\cS^c} \|_1 + \lambda \|\Delta \delta_\cS \|_1 \notag \\
        &\quad+\|\nabla_R\cL(R^*,\delta^*) \|_{\Sigma_0^\dagger} \|\Delta R \|_{\Sigma_0}+ \| \nabla_\delta\cL(\hat{R},\delta^*)\|_\infty (\| \Delta \delta_\cS \|_1 + \| \Delta \delta_{\cS^c} \|_1  ) \\
        &=  (\lambda + \| \nabla_\delta\cL(\hat{R},\delta^*)\|_\infty) \|\Delta \delta_\cS \|_1 -(\lambda -\| \nabla_\delta\cL(\hat{R},\delta^*)\|_\infty) \|\Delta \delta_{\cS^c} \|_1  \\
        &\quad +\|\nabla_R\cL(R^*,\delta^*) \|_{\Sigma_0^\dagger} \|\Delta R \|_{\Sigma_0}. 
    \end{align*}
    Furthermore, taking $\lambda \geq \| \nabla_\delta\cL(\hat{R},\delta^*)\|_\infty$, then we have
    \begin{align*}
        \gamma \|\Delta R \|^2_{\Sigma_{0}} +\frac{\gamma}{n} \|\Delta \delta \|_2^2 \leq  2 \lambda \|\Delta \delta_\cS \|_1  +\|\nabla_R\cL(R^*,\delta^*) \|_{\Sigma_0^\dagger} \|\Delta R \|_{\Sigma_0}. 
    \end{align*}

    \subsection{Proof of Lemma \ref{lemma:grad-delta}}

        By the definition of $\cL(R,\delta) $, we can directly compute its gradient with respect to $\delta$ as
\begin{align*}
    \nabla_\delta\cL(R,\delta) = \left[-\frac{1}{n}\left( \ind(y_i=1) \frac{1}{1+ \exp( \langle x_i, R \rangle + \delta_i)}
     -\ind(y_i=0) \frac{1}{1 + \exp( -\langle x_i, R \rangle + \delta_i)}\right) \right]_{i=1}^n .
    \end{align*}
    Since the value of exponential function is always positive, we have
    \begin{equation*}
        \| \nabla_\delta\cL(\hat{R},\delta^*)\|_\infty \leq \frac{1}{n} \max_{i=1,\ldots,n} \left\{ \frac{1}{1+ \exp( \langle x_i,R\rangle + \delta_i)}, \frac{1}{1 + \exp( -\langle x_i, R \rangle + \delta_i)}\right\} \leq \frac{1}{n}.
    \end{equation*}

\section{Implementation details}\label{appen: id}

\subsection{Robotic control}\label{appen: id: rc}

Our implementations of robotic control tasks are based on the Stable-Baselines3 library \cite{stable-baselines3} and the RL Zoo training framework \cite{rl-zoo3}. For $R^3M$ and the baseline (cross-entropy loss), we tune the number of epochs in $\{1,3,5\}$. We use Adam optimizer \cite{adam} and tune the learning rate in $\{1e-2, 5e-3, 1e-3\}$ for the Ant and HalfCheetah, and set the learning rate to $1e-2$ for the Hopper. For $R^3M$, we tune the $\lambda$ in $\{0.1,0.2,0.3,0.4,0.5,0.6,0.7,0.8,0.9\}$. For the stochastic noise model, we set the segment length to $1$ for all tasks. The batch size is set to $64$ for both the Ant and HalfCheetah tasks, and $8$ for the Hopper task. For the myopic noise model, we set the segment length to $16$ for all tasks. The batch size is set to $32$ for both the Ant and HalfCheetah tasks, and $16$ for the Hopper task. For the irrational noise model, we set the segment length to $1$ for all tasks and the batch size to $64$ for all tasks. The chosen hyperparameters for all noise setups across the three tasks are summarized in Table \ref{tab:robot_hyperparams1}. For PPO, we reused the hyperparameters optimized for the Mujoco benchmark from the original paper \cite{schulman2017proximal}. Details of these PPO hyperparameters are provided in Table \ref{tab:robot_hyperparams3}.


\begin{table}[htb!]
\centering
\resizebox{.9\linewidth}{!}{%
\begin{minipage}{1\linewidth}
\caption{Chosen hyperparameters for reward learning used for Table~\ref{tab:rc_main}.}
\label{tab:robot_hyperparams1}
\centering
\begin{tabular}{cccccccc}
\toprule
\bf Task                       & \bf Noise model             & \bf Noise rate                 & \bf Method & \# epochs & $\cB$ & Learning rate & $\lambda$
\\\cmidrule(lr){1-8}
\multirow{18}{*}{HalfCheetah}  & \multirow{6}{*}{stochastic} & \multirow{2}{*}{1.0}           & CE         &  5        & 64     & 1e-2         & -
\\ 
                               &                             &                                & $R^3M$     &  5        & 64     & 5e-3         & 0.7
\\                               
                               &                             & \multirow{2}{*}{2.0}           & CE         &  5        & 64     & 1e-3         & -
\\
                               &                             &                                & $R^3M$     &  5        & 64     & 1e-2         & 0.4
\\
                               &                             & \multirow{2}{*}{3.0}           & CE         &  5        & 64     & 5e-3         & -
\\
                               &                             &                                & $R^3M$     &  5        & 64     & 5e-3         & 0.5             
\\\cmidrule(lr){2-8} 
                               & \multirow{6}{*}{myopic}     & \multirow{2}{*}{0.3}           & CE         &  5        & 32     & 5e-3         & -
\\
                               &                             &                                & $R^3M$     &  5        & 32     & 5e-3         & 0.4
\\                               
                               &                             & \multirow{2}{*}{0.5}           & CE         &  5        & 32     & 1e-3         & -
\\
                               &                             &                                & $R^3M$     &  3        & 32     & 1e-2         & 0.5
\\
                               &                             & \multirow{2}{*}{0.7}           & CE         &  5        & 32     & 5e-3         & -
\\
                               &                             &                                & $R^3M$     &  5        & 32     & 5e-3         & 0.3
\\\cmidrule(lr){2-8}
                               & \multirow{6}{*}{irrational} & \multirow{2}{*}{1/3}           & CE         &  3        & 64     & 5e-3         & -
\\
                               &                             &                                & $R^3M$     &  3        & 64     & 5e-3         & 0.8
\\                               
                               &                             & \multirow{2}{*}{1/2}           & CE         &  3        & 64     & 1e-2         & - 
\\
                               &                             &                                & $R^3M$     &  5        & 64     & 1e-3         & 0.9
\\
                               &                             & \multirow{2}{*}{2/3}           & CE         &  3        & 64     & 5e-3         & -
\\
                               &                             &                                & $R^3M$     &  5        & 64     & 5e-3         & 0.5
\\\cmidrule(lr){1-8}
\multirow{18}{*}{Ant}          & \multirow{6}{*}{stochastic} & \multirow{2}{*}{1.0}           & CE         &  5        & 64     & 1e-3         & -
\\ 
                               &                             &                                & $R^3M$     &  5        & 64     & 1e-3         & 0.5
\\                               
                               &                             & \multirow{2}{*}{2.0}           & CE         &  5        & 64     & 1e-3         & -
\\
                               &                             &                                & $R^3M$     &  5        & 64     & 1e-3         & 0.1
\\
                               &                             & \multirow{2}{*}{3.0}           & CE         &  5        & 64     & 1e-3         & -
\\
                               &                             &                                & $R^3M$     &  5        & 64     & 5e-3         & 0.1
\\\cmidrule(lr){2-8} 
                               & \multirow{6}{*}{myopic}     & \multirow{2}{*}{0.3}           & CE         &  5        & 32     & 5e-3         & -
\\
                               &                             &                                & $R^3M$     &  5        & 32     & 5e-3         & 0.5
\\                               
                               &                             & \multirow{2}{*}{0.5}           & CE         &  5        & 32     & 1e-2         & -
\\
                               &                             &                                & $R^3M$     &  3        & 32     & 5e-3         & 0.6
\\
                               &                             & \multirow{2}{*}{0.7}           & CE         &  3        & 32     & 5e-3         & - 
\\
                               &                             &                                & $R^3M$     &  5        & 32     & 5e-3         & 0.1
\\\cmidrule(lr){2-8}
                               & \multirow{6}{*}{irrational} & \multirow{2}{*}{1/3}           & CE         &  1        & 64     & 5e-3         & -
\\
                               &                             &                                & $R^3M$     &  3        & 64     & 1e-3         & 0.3
\\                               
                               &                             & \multirow{2}{*}{1/2}           & CE         &  3        & 64     & 5e-3         & -
\\
                               &                             &                                & $R^3M$     &  1        & 64     & 5e-3         & 0.3
\\
                               &                             & \multirow{2}{*}{2/3}           & CE         &  3        & 64     & 5e-3         & -
\\
                               &                             &                                & $R^3M$     &  3        & 64     & 1e-2         & 0.4
\\\cmidrule(lr){1-8}
\multirow{18}{*}{Hopper}       & \multirow{6}{*}{stochastic} & \multirow{2}{*}{1.0}           & CE         &  1        & 8      & 1e-2         & -
\\ 
                               &                             &                                & $R^3M$     &  5        & 8      & 1e-2         & 0.8
\\                               
                               &                             & \multirow{2}{*}{2.0}           & CE         &  5        & 8      & 1e-2         & -
\\
                               &                             &                                & $R^3M$     &  3        & 8      & 1e-2         & 0.1
\\
                               &                             & \multirow{2}{*}{3.0}           & CE         &  5        & 8      & 1e-2         & -
\\
                               &                             &                                & $R^3M$     &  5        & 8      & 1e-2         & 0.1                 
\\\cmidrule(lr){2-8} 
                               & \multirow{6}{*}{myopic}     & \multirow{2}{*}{0.3}           & CE         &  3        & 16      & 1e-2         & -
\\
                               &                             &                                & $R^3M$     &  3        & 16      & 1e-2         & 0.6
\\                               
                               &                             & \multirow{2}{*}{0.5}           & CE         &  3        & 16      & 1e-2         & -
\\
                               &                             &                                & $R^3M$     &  3        & 16      & 1e-2         & 0.6
\\
                               &                             & \multirow{2}{*}{0.7}           & CE         &  1        & 16      & 1e-2         & -
\\
                               &                             &                                & $R^3M$     &  5        & 16      & 1e-2         & 0.1
\\\cmidrule(lr){2-8}
                               & \multirow{6}{*}{irrational} & \multirow{2}{*}{1/3}           & CE         &  5        & 64      & 1e-2         & -
\\
                               &                             &                                & $R^3M$     &  5        & 64      & 1e-2         & 0.1
\\                               
                               &                             & \multirow{2}{*}{1/2}           & CE         &  5        & 64      & 1e-2         & -
\\
                               &                             &                                & $R^3M$     &  3        & 64      & 1e-2         & 0.3
\\
                               &                             & \multirow{2}{*}{2/3}           & CE         &  1        & 64      & 1e-2         & -
\\
                               &                             &                                & $R^3M$     &  1        & 64      & 1e-2         & 0.2      
\\\bottomrule
\end{tabular}
\end{minipage}%
}
\end{table}

\begin{table}[htb!]
\caption{Chosen hyperparameters for PPO.}
\label{tab:robot_hyperparams3}
\centering
\begin{tabular}{ll}
\toprule
 Parameter  &  Value \\ 
\midrule
optimizer & Adam\\
discount ($\gamma$)  & 0.99\\
value function coefficient & 0.5 \\
entropy coefficient & 0.0 \\
shared network between actor and critic & False \\
max gradient norm & 0.5 \\
learning rate schedule & constant \\
advantage normalization & True \\
clip range value function & no \\
number of steps per rollout & 2048 \\
initial $\log\sigma$ & 0.0 \\
learning rate & $3\cdot10^{-4}$ \\
number of epochs & 10 \\
number of samples per mini-batch & 64 \\
non-linearity & \textit{Tanh} \\
GAE coefficient ($\lambda$) & 0.95 \\
clip range & 0.2 \\
orthogonal initialization & yes\\
\bottomrule
\end{tabular}
\end{table}

\subsection{Natural language generation}\label{appen: id: nlg}
Our implementations of natural language generation tasks are based on transformers \cite{wolf-etal-2020-transformers} and trl training framework \cite{vonwerra2022trl}. We conduct our experiment using eight A100 GPUs, each with 40GB of memory. Training a single model took approximately two hours. We provide more details on each task as follows:
\subsubsection{Summarization} 
For the instruction tuning stage, we randomly select 800 data from the filtered TL;DR summarization dataset \citep{tldr_2017} for testing the policy and leave the rest for supervised tuning.
In the preference optimization stage, we split the preference dataset \citep{learn_summ_2020} into a training and testing set to evaluate the preference accuracy. 
For both stages, we omit the title and only use the post content as the prompt.
The prompt format follows: "POST: post content.\textbackslash n\textbackslash nTL;DR:".

For $R^3M$-DPO and all baselines, we set the batch size to 32 and train 1 epoch for both instruction tuning and preference optimization.
We set the $\alpha$ parameters of LoRA fine-tuning to 16, and tune the other parameters by grid search.
The learning rate is tuned in $\{5e-6, 5e-5, 1e-4, 5e-4\}$.
SLiC-HF, IPO and DPO include parameter $\beta$, which is tuned in a range of $\{0.01, 0.1, 0.3, 0.5\}$.
For $R^3M$-DPO, we tune the $\lambda$ in $\{0.65, 0.75, 0.85\}$.

\subsubsection{Single-turn dialogue} 
We use the original training split in the Anthropic Helpful and Harmless dialogue preferences dataset \citep{HH_rlhf} for training in both stages.
We randomly select 800 samples from its testing split to calculate the win rate, and use the rest of the data in the testing split for validation during preference optimization.
We use the original data format.

In the dialogue task, we use the same batch size of 32 and 1 epoch for training.
The learning rate is tuned in $\{5e-6, 5e-5, 1e-4\}$.
The parameter $\beta$ for baselines is tuned in a range of $\{0.01, 0.1, 0.3\}$.
For $R^3M$-DPO, we tune the $\lambda$ in $\{0.75, 0.8, 0.85\}$.

\subsubsection{Evaluation prompt.} 
We apply Claude 3 Sonnet for win rate judgments. Here, we include the used prompt for generating win rates for both generation tasks. The order of responses is switched and a response only counts for win if it wins in both orders.

\textbf{Summarization win rate prompt.}
\begin{center}
\resizebox{.85\linewidth}{!}{%
\begin{minipage}{1.0\linewidth}
\fbox{
\parbox{0.95\linewidth}{
\fontfamily{qcr}\selectfont
Human: Which of the following summaries does a better job of summarizing the most important points in the given forum post, without including unimportant or irrelevant details? A good summary is both precise and concise.
\\
\\
Post: <prompt>
\\
Summary A: <response A>
\\
Summary B: <response B>
\\
\\
FIRST provide a one-sentence comparison of the two summaries, explaining which you prefer and why. SECOND, on a new line, state only "A" or "B" to indicate your choice. Your response should use the format:

Comparison: <one-sentence comparison and explanation>

Preferred: <"A" or "B">
\\
\\
Assistant:
}
}%
\end{minipage}
}
\end{center}

\textbf{Dialogue win rate prompt.}

\begin{center}
\resizebox{.85\linewidth}{!}{%
\begin{minipage}{1.0\linewidth}
\fbox{
\parbox{0.95\linewidth}{
\fontfamily{qcr}\selectfont
Human: For the following query to a chatbot, which response is more helpful and harmless?
\\
\\
Query: <prompt>

Response A: <response A>
  
Response B: <response B>
\\
\\
FIRST provide a one-sentence comparison of the two responses and explain which you feel is more helpful and harmless. SECOND, on a new line, state only "A" or "B" to indicate which response is more helpful. Your response should use the format:

Comparison: <one-sentence comparison and explanation>

More helpful: <"A" or "B">
\\
\\
Assistant:
}
}%
\end{minipage}
}
\end{center}



\end{document}